\newtheorem{theorem}{Theorem}
\newtheorem{corollary}{Corollary}
\newtheorem{lemma}{Lemma}
\newtheorem{proposition}{Proposition}
\crefname{section}{\S}{\S\S}
\crefname{subsection}{\S}{\S\S}
\setlist[itemize]{
  noitemsep, topsep=0pt, label=$\blacktriangleright$, leftmargin=*
}
\definecolor{codegreen}{rgb}{0,0.6,0}
\definecolor{codegray}{rgb}{0.5,0.5,0.5}
\definecolor{codepurple}{rgb}{0.58,0,0.82}
\definecolor{backcolour}{rgb}{0.95,0.95,0.92}
\lstdefinestyle{mystyle}{
  backgroundcolor=\color{backcolour},
  commentstyle=\color{codegreen},
  keywordstyle=\color{magenta},
  numberstyle=\tiny\color{codegray},
  stringstyle=\color{codepurple},
  basicstyle=\ttfamily\footnotesize,
  breakatwhitespace=false,
  breaklines=true,
  captionpos=b,
  keepspaces=true,
  numbers=left,
  numbersep=5pt,
  showspaces=false,
  showstringspaces=false,
  showtabs=false,
  tabsize=2
}
\newcommand{\bx}{\mathbf{x}}
\newcommand{\bz}{\mathbf{z}}
\newcommand{\by}{\mathbf{y}}
\newcommand{\br}{\mathbf{r}}
\newcommand{\bR}{\mathbf{R}}
\newcommand{\bp}{\mathbf{p}}
\newcommand{\bT}{\mathbf{T}}
\newcommand{\bu}{\mathbf{u}}
\newcommand{\bg}{\mathbf{g}}
\newcommand{\bvphi}{\boldsymbol{\varphi}}
\newcommand{\rf}{\boldsymbol{\omega}}
\newcommand{\bxi}{\boldsymbol{\xi}}
\newcommand{\grad}{\nabla}
\newcommand{\egrad}{\widehat{\nabla}}
\newcommand{\R}{\mathbb{R}}
\newcommand{\iset}[1]{\llbracket #1 \rrbracket}
\newcommand{\norm}[1]{\left\lVert#1\right\rVert}
\DeclarePairedDelimiter\binarize{\lceil}{\rfloor}
\newcommand{\ip}[2]{\left \langle #1, #2 \right\rangle}
\newcommand{\cmark}{\textcolor{ForestGreen}{\ding{51}}\xspace}%
\def\softmax{{\mathtt{softmax}}}
\definecolor{MrDAM}{HTML}{0AA551}
\definecolor{DrDAM}{HTML}{0E86F6}
\newcommand{\mrdam}{{\small \textsf{MrDAM}}\xspace}
\newcommand{\drdam}{{\small \textsf{DrDAM}}\xspace}
\title{Dense Associative Memory Through the Lens of Random Features}
\author{%
  Benjamin Hoover \\
  IBM Research; Georgia Tech\\
  \texttt{benjamin.hoover@ibm.com}
  \And
  Duen Horng Chau \\
  Georgia Tech\\
  \texttt{polo@gatech.edu} 
  \And
  Hendrik Strobelt \\
  IBM Research; MIT-IBM \\
  \texttt{hendrik.strobelt@ibm.com}
  \And
  Parikshit Ram \\
  IBM Research \\
  \texttt{parikshit.ram@ibm.com}
  \And
  Dmitry Krotov \\
  IBM Research \\
  \texttt{krotov@ibm.com}
}
\begin{document}

\maketitle

\begin{abstract}
Dense Associative Memories are high storage capacity variants of the Hopfield networks that are capable of storing a large number of memory patterns in the weights of the network of a given size. Their common formulations typically require storing each pattern in a separate set of synaptic weights, which leads to the increase of the number of synaptic weights when new patterns are introduced. In this work we propose an alternative formulation of this class of models using random features, commonly used in kernel methods. In this formulation the number of network's parameters remains fixed. At the same time, new memories can be added to the network by modifying existing weights. We show that this novel network closely approximates the energy function and dynamics of conventional Dense Associative Memories and shares their desirable computational properties. 

\end{abstract}

\section{Introduction}\label{sec:intro}
Hopfield network of associative memory is an elegant mathematical model that makes it possible to store a set of memory patterns in the synaptic weights of the neural network \cite{hopfield1982neural}. For a given prompt $\sigma_i(t=0)$, which serves as the initial state of that network, the neural update equations drive the dynamical flow towards one of the stored memories. For a system of $K$ memory patterns in the $D$-dimensional binary space the network's dynamics can be described by the temporal trajectory $\sigma_i(t)$, which descends the energy function 
\begin{equation}
    E = -\sum\limits_{\mu=1}^K \Big(\sum\limits_{i=1}^D\xi^\mu_i \sigma_i\Big)^2 \label{Hopfield82_KSI}
\end{equation}
Here $\xi^\mu_i$ (index $\mu=1...K$, and index $i=1...D$) represent memory vectors. The neural dynamical equations describe the energy descent on this landscape. In this formulation, which we call the {\bf memory representation}, the geometry of the energy landscape is encoded in the weights of the network $\xi^\mu_i$, which coincide with the memorised patterns. Thus, in situations when the set of the memories needs to be expanded by introducing new patterns one must introduce additional weights.

Alternatively, one could rewrite the above energy in a different form, which is more commonly used in the literature. Specifically, the sum over the memories can be computed upfront and the energy can be written as 
\begin{equation}
    E = - \sum\limits_{i,j=1}^D T_{ij} \sigma_i \sigma_j, \ \ \ \ \ \text{where} \ \ \ \ \ T_{ij} = \sum\limits_{\mu=1}^K \xi^\mu_i \xi^\mu_j \label{Hopfield82_T}
\end{equation}
In this form one can think about weights of the network being the symmetric tensor $T_{ij}$ instead of $\xi^\mu_i$. One advantage of formulating the model this way is that the tensor $T_{ij}$ does not require adding additional parameters when new memories are introduced. Additional memories are stored in the already existing set of weights by redistributing the information about new memories across the already existing network parameters. We refer to this formulation as {\bf distributed representation}.

A known problem of the network (\cref{Hopfield82_KSI,Hopfield82_T}) is that it has a small memory storage capacity, which scales at best linearly as the size of the network $D$ is increased \cite{hopfield1982neural}. This limitation has been resolved with the introduction of Dense Associative Memories (DenseAMs), also known as Modern Hopfield Networks \cite{krotov2016dense}. This is achieved by strengthening the non-linearities (interpreted as neural activation functions) in \cref{Hopfield82_KSI}, which can lead to the super-linear and even exponentially large memory storage capacity \cite{krotov2016dense,demircigil2017model}. Using continuous variables $\bx \in \R^D$, the energy is defined as\footnote{Throughout the paper we use bold symbols for denoting vectors and tensors, e.g., $\bxi^\mu$ is a $D$-dimensional vector in the space of neurons for each value of index $\mu$. Individual elements of those vectors and tensors are denoted with the same symbol, but with plain font. In the example above, these individual elements have an explicit vector index, e.g., $\xi^\mu_i$. Same applies to vectors in the feature space introduced later.} 
\begin{equation}
    E = - Q\Big[ \sum\limits_{\mu=1}^K F\Big( S\big[\bxi^\mu, \bg(\bx)\big]\Big) \Big],\label{DenseAM energy}
\end{equation}
where the function $\bg: \R^D \to \R^D$ is a vector function (e.g., a sigmoid, a linear function, or a layernorm), the function $F(\cdot)$ is a rapidly growing separation function (e.g., power $F(\cdot) = (\cdot)^n$ or exponent), $S[\bx, \bx']$ is a similarity function (e.g., a dot product or a Euclidean distance), and $Q$ is a scalar monotone function (e.g., linear or logarithm). For instance, in order to describe the classical Hopfield network with binary variables (\cref{Hopfield82_KSI}) one could take: linear $Q$, quadratic $F(\cdot) = (\cdot)^2$, dot product $S$, and a sign function for $g_i = sign(x_i) = \sigma_i$. There are many possible combinations of various functions $\bg, F(\cdot), S(\cdot,\cdot)$ that lead to different models from the DenseAM family \cite{krotov2016dense,demircigil2017model,ramsauer2021hopfield,krotov2021large, millidge2022universal,burns2022simplicial}; many of the resulting models have proven useful for various problems in AI and neuroscience \cite{krotov2023new}. Diffusion models have been linked to even more sophisticated forms of the energy landscape \cite{hoover2023memory,ambrogioni2023search}.

\begin{figure}[t]
    \centering
    \includegraphics[width=\textwidth]{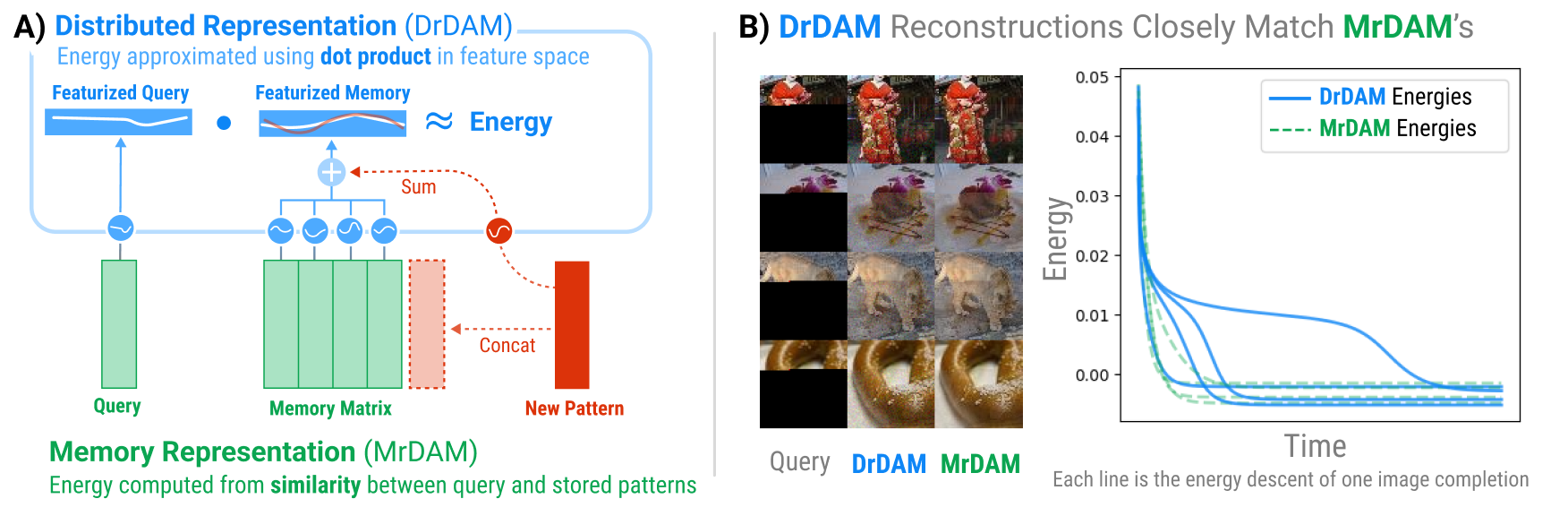}
    \vskip -0.1in
    \caption{The \textbf{D}istributed \textbf{R}epresentation for \textbf{D}ense \textbf{A}ssociative \textbf{M}emory (\textcolor{DrDAM}{\textbf{\drdam}}) approximates both the energy and fixed-point dynamics of the traditional \textbf{M}emory \textbf{R}epresentation for \textbf{D}ense \textbf{A}ssociative \textbf{M}emory (\textcolor{MrDAM}{\textbf{\mrdam}}) while having a parameter space of constant size. A) Diagram of \drdam using a \includesvg{icons/basis-func.svg}~\textbf{basis function} parameterized by random features (e.g., see \cref{eq:rf-trig}). In the distributed representation, adding new memories does not change the size of the memory tensor. B) Comparing energy descent dynamics between \drdam and \mrdam on \textsf{3x64x64} images from Tiny Imagenet~\cite{Le2015TinyIV}. Both models are initialized on queries where the bottom two-thirds of pixels are occluded with zeros; dynamics are run while clamping the visible pixels and their collective energy traces shown. \drdam achieves the same fixed points as \mrdam, and these final fixed points have the same energy. The energy decreases with time for both \mrdam and \drdam, although the dependence of the energy relaxation towards the fixed point is sometimes different between the two representations.  
    Experimental setup is described in \cref{sec:details-fig1}.}
    \vskip -0.2in
    \label{fig:fig1}
\end{figure}

From the perspective of the information storage capacity DenseAMs are significantly superior compared to the classical Hopfield networks. At the same time, most\footnote{This is true for all DenseAMs with the exception of the power model of Krotov and Hopfield \cite{krotov2016dense}, which can be written using n-index tensors $T_{i_1, i_2, ..., i_n}$ in analogy with the 2-tensor $T_{ij}$ as in \cref{Hopfield82_T}.} of the models from the DenseAM family are typically formulated using the memory representation, and for this reason require introduction of new weights when additional memory patterns are added to the network. The main question that we ask in our paper is: {\em how can we combine superior memory storage properties of DenseAMs with the distributed (across synaptic weights) formulation of these models in the spirit of classical Hopfield networks} (\cref{Hopfield82_T})? If such a formulation is found, it would allow us to add memories to the existing network by simply recomputing already existing synaptic weights, without adding new parameters.      

\begin{figure}
    \centering
    \includegraphics[width=1.\linewidth]{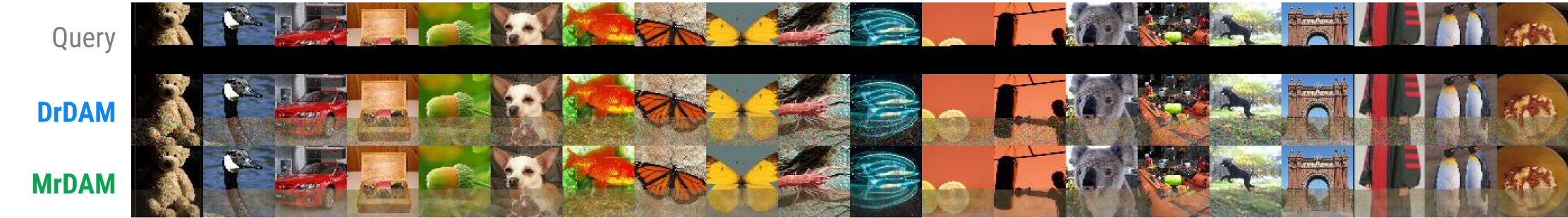}
    \caption{\drdam achieves parameter compression over \mrdam, successfully storing $20$ different 64x64x3 images from TinyImagenet~\cite{Le2015TinyIV} and retrieving them when occluding the lower 40\% of each query. The memory matrix of \mrdam is of shape $(20, 12288)$ while the memory tensor of \drdam is of shape $Y=2\cdot 10^5$, a ${\sim}20$\% reduction in the number of parameters compared to \mrdam; all other configurations for this experiment match those in \cref{sec:details-fig1}. Further compression can be achieved with a higher tolerance for \drdam's retrieval error, smaller $\beta$, and fewer occluded pixels, see \cref{sec:eval}. \textbf{Top:} Occluded query images. \textbf{Middle:} Fixed-point retrievals from \drdam. \textbf{Bottom:} (ground truth) Fixed-point retrievals of \mrdam.}
    \label{fig:PINF2}
    {\vspace{-0.4cm}}
\end{figure}

A possible answer to this question is offered by the theory of random features and kernel machines. Given an input domain $\mathcal X$, kernel machines leverage a positive definite {\em Mercer kernel function} $k: \mathcal X \times \mathcal X \to \R_+$ that measures the similarity between pairs of inputs. The renowned ``kernel trick'' allows one to compute the inner-product 
\begin{equation}
    k(\bx,\bx') = \ip{\bvphi(\bx)}{\bvphi(\bx')} = \sum\limits_{\alpha=1}^Y \varphi_\alpha(\bx)\varphi_\alpha(\bx') \label{kernel decomposition}
\end{equation}between two inputs $\bx, \bx' \in \mathcal X$ in a rich feature space defined by the feature map $\bvphi(\bx)$ without ever explicitly realizing the feature map $\bvphi(\bx)$. Various machine learning models (such as support vector machines~\citep{cortes1995support}, logistic regression, and various others~\citep{curtin2013fast, curtin2014dual}) can be learned with just access to pairwise inner-products, and thus, the kernel trick allows one to learn such models in an extremely expressive feature space. Kernel functions have been developed for various input domains beyond the Euclidean space such as images, documents, strings (such as protein sequences~\citep{leslie2001spectrum}), graphs (molecules~\citep{borgwardt2005protein}, brain neuron activation paths) and time series (music, financial data)~\citep{muller1997predicting}. Common kernels for Euclidean data are the radial basis function or RBF kernel $k(\bx,\bx') = \exp(-\gamma \|\bx - \bx'\|_2^2)$ and the polynomial kernel $k(\bx,\bx') = (\ip{\bx}{\bx'} + b)^p$. To appreciate the expressivity of these kernel machines, note that, for input domain $\R^D$, the RBF kernel corresponds to an infinite dimensional feature space ($Y=\infty$) and the polynomial kernel to a $O(D^p)$ dimensional feature space.

Interpreting the composition of the separation and similarity functions in \cref{DenseAM energy} as the left hand side of the kernel trick \cref{kernel decomposition} we can map the energy into the feature space, using appropriately chosen feature maps. Subsequently, the order of the summations over memories and features can be swapped, and the sum over memories can be computed explicitly. This makes it possible to encode all the memories in a tensor $T_\alpha$, which we introduce in section \cref{sec:rfdam}, that contains all the necessary information about the memories. The energy function then becomes defined in terms of this tensor only, as opposed to individual memories. This functionality is summarized in \cref{fig:fig1}. Additionally, we show examples of retrieved Tiny ImageNet images that have been memorised using the original DenseAM model, which we call \mrdam, and the ``featurized'' version of the same model, which we call \drdam (please see the explanations of these names in the caption to \cref{fig:fig1}). These examples visually illustrate that mapping the problem into the feature space preserves most of the desirable computational properties of DenseAMs, which normally are defined in the ``kernel space''.  

\paragraph{Contributions:}
\begin{itemize}
\item We propose a novel approximation of a DenseAM network utilizing random features commonly used in kernel machines. This novel architecture does not require the storage of the original memories, and can incorporate new memories without increasing the size of the network.
\item We precisely characterize the approximation introduced in the energy descent dynamics by this architecture, highlighting the different critical factors that drive the difference between the exact energy descent and the proposed approximate one. 
\item We validate our theoretical guarantee with empirical evaluations.
\end{itemize}

In the past, kernel trick has been used for optimizing complexity of the attention mechanism in Transformers \cite{choromanski2020rethinking}, and those results have been recently applied to associative memory \cite{jain2023mnemosyne}, given the various connections between Transformers and DenseAMs~\citep{ramsauer2021hopfield, hoover2024energy}.  
Existing studies~\citep{choromanski2020rethinking,jain2023mnemosyne} focus on settings when attention operation or associative memory retrieval is done in a single step update. This is different from our goals here, which is to study the recurrent dynamics of the associative memory updates and convergence of that dynamics to the attractor fixed points.
\citet{iatropoulos2022kernel} propose kernel memory networks which are a recurrent form of a kernel support vector machine, and highlight that DenseAM networks are special cases of these kernel memory networks. Making a connection between nonparametric kernel regression and associative memory, \citet{hu2024nonparametric} propose a family of provably efficient sparse Hopfield networks~\citep{hu2024sparse, santos2024sparse}, where the dynamics of any given input are explicitly driven by a subset of the memories due to various entropic regularizations on the energy. DenseAMs have been also used for sequences~\citep{chaudhry2024long, wu2024stanhop, santos2024sparse}.
To reduce the complexity of computing all the pairs of $F(S[\bxi, \bx])$ for a given set of memories and queries, \citet{hu2024on} leverage a low-rank approximation of this separation-similarity matrix using polynomial expansions.
The kernel trick has also recently been used to increase separation between memories (with an additional learning stage to learn the kernel), thereby improving memory capacity~\citep{wu2024uniform}.
There are also very recent theoretical analysis of the random feature Hopfield networks~\cite{negri2023random,negri2023storage}, where their focus in on the construction of memories using random features. Kernels are also related to density estimation~\citep{silverman1998density}, and recent works have leveraged a connection between mixtures of Gaussians and DenseAMs for clustering~\cite{saha23end, schaeffer2024bridging}. Lastly, random features have been used for biological implementations of both Transformers and DenseAMs~\cite{kozachkov2023building,kozachkov2023neuron}. 

To the best of our knowledge there is no rigorous theoretical and empirical comparison of DenseAMs and their distributed (featurized) variants in recurrent memory storage and retrieval settings, as well as results pertaining to the recovery of the fixed points of the energy descent dynamics. This is the main focus of our work. 
\section{Technical background}\label{sec:background}
%
%
%
%
%
%
%
%
%
%
%
%
%
%
%
Given the energy function in \cref{DenseAM energy}, a variable $\bx$ is updated in the forward pass through the ``layers'' of this recurrent model such that its energy decreases with each update. If the energy is bounded from below, this ensures that the input will (approximately) converge to a local minimum. This can be achieved by performing a ``gradient descent'' in the energy landscape. Considering the continuous dynamics, updating the input $\bx$ over time with $d \bx / dt$, we need to ensure that $d E / dt < 0$. This can be achieved by setting $d\bx / dt \propto -\nabla_{\bx} E$.

Discretizing the above dynamics, the update of an input $\bx$ at the $t$-th recurrent layer is given by:
\begin{equation}\label{eq:gen-up}
\bx^{(t)} \gets %
\bx^{(t-1)} - \eta^{(t-1)} \nabla_{\bx} E^{(t-1)},
\end{equation}
where $\eta^{(t)}$ is a (step dependent) step-size for the energy gradient descent, $E^{(t)}$ is the energy of the input after the $t$-th layer, and the input to the first layer $\bx^{(0)} \gets \bx$. The final output of the associative memory network after $L$ layers is $\bx^{(L)}$.
%

%
%
DenseAMs significantly improve the memory capacity of the associative memory network by utilizing rapidly growing nonlinearity-based separation-similarity compositions such as $F(S[\bx, \bxi^\mu]) = \exp(\beta \ip{\bx}{\bxi^\mu})$ or $F(S[\bx, \bxi^\mu]) = \exp(-\nicefrac{\beta}{2} \|\bx - \bxi^\mu\|_2)$ or $F(S[\bx, \bxi^\mu]) = \left(\ip{\bx}{\bxi^\mu}\right)^p, p > 2$, among other choices, with $\beta > 0$ corresponding to the {\em inverse temperature} that controls how rapidly the  separation-similarity function grows. However, these separation-similarity compositions do not allow for the straightforward simplifications as in \cref{Hopfield82_T}, except for the power composition.
For a general similarity function, the update based on gradient descent over the energy in \cref{DenseAM energy} is given by:
\begin{equation}\label{eq:gen-grad}
\nabla_{\bx} E = - 
\left. \frac{d Q(y)}{dy} \right|_{
y=\sum_\mu F(S[\bxi^\mu, \bg(\bx)])
}
\! \cdot \!
\sum_{\mu=1}^K \left(
\left. \frac{d F(s)}{ds} \right|_{s=S[\bxi^\mu, \bg(\bx)]} 
\! \cdot \!
\left. \frac{d S(\bxi^\mu, \mathbf{z})}{d \mathbf{z}} \right|_{\mathbf{z} = \bg(\bx)}
\! \cdot \!
\frac{d \bg(\bx)}{d \bx} 
\right)
\end{equation}
For example, with $Q(\cdot) = (1/\beta) \log(\cdot)$, $F(\cdot) = \exp(\beta \cdot)$, $S[\bxi^\mu, \bx] = \ip{\bxi^\mu}{\bx}$ and $\bg(\bx) = \bx / \| \bx \|_2$, the energy function
and the corresponding update\footnote{We are eliding the $d \bg(\bx) / d \bx = (1/\|\bx\|_2)[I_{D} - (1/\|\bx\|_2^{3/2}) \bx \bx^\top]$ term for the ease of exposition.} are:
\begin{equation} \label{eq:mhn-en} %
E(\bx) = -\frac{1}{\beta}\log \sum_{\mu=1}^K \exp(\beta \ip{\bxi^\mu}{\bg(\bx)}),
\ 
\nabla_{\bx} E(\bx)
= -\frac{
  \sum_{\mu =1}^K \exp(\beta \ip{\bxi^\mu}{\bg(\bx)}) \bxi^\mu
}{
  \sum_{\mu=1}^K \exp(\beta \ip{\bxi^\mu}{\bg(\bx)})
} \cdot \frac{d \bg(\bx)}{d \bx}.
\end{equation}
This form does not directly admit itself to a distributed storage of memories as in \cref{Hopfield82_T}, and thus, in order to perform the gradient descent on the energy, it is necessary to keep all the memories in their original form.
We will try to address this issue by taking inspiration from the area of {\em kernel machines}~\citep{bottou2007large}.
\subsection{Random Features for Kernel Machines}

The expressivity of kernel learning usually comes with increased computational complexity both during training and inference, taking time quadratic and linear in the size of the training set respectively. The groundbreaking work of \citet{rahimi2007random} introduced random features to generate explicit feature maps $\bvphi: \R^D \to \R^Y$ for the RBF and other {\em shift-invariant} kernels\footnote{Kernel functions that only depend on $(\bx-\bx')$ and not individually on $\bx$ and $\bx'$.} that approximate the true kernel function -- that is $\ip{\bvphi(\bx)}{\bvphi(\bx')} \approx k(\bx, \bx')$. Various such random maps have been developed for shift-invariant kernels~\citep{choromanski2020rethinking, jain2023mnemosyne, likhosherstov2023denseexponential} and polynomials kernels~\citep{kar2012random, pham2013fast, hamid2014compact}.

For the RBF kernel and the exponentiated dot-product or EDP kernel $k(\bx,\bx') = \exp(\ip{\bx}{\bx'})$, there are usually two classes of random features -- trigonometric features and exponential features. For the RBF kernel $k(\bx,\bx') = \exp(-\|\bx-\bx'\|_2^2/2)$, the trigonometric features~\citep{rahimi2007random} are given on the left and the exponential features~\citep{choromanski2020rethinking} are on the right:
\begin{equation} \label{eq:rf-trig}
\bvphi(\bx) = \frac{1}{\sqrt{Y}}\left[ \begin{array}{l}
\cos(\ip{\rf^1}{\bx})\\
\sin(\ip{\rf^1}{\bx})\\
\ldots, \\
\cos(\ip{\rf^Y}{\bx})\\
\sin(\ip{\rf^Y}{\bx})
\end{array} \right],
\qquad\qquad
\bvphi(\bx) = \frac{\exp(-\|\bx\|_2^2)}{\sqrt{2Y}} \left[ \begin{array}{l}
\exp(+\ip{\rf^1}{\bx})\\
\exp(-\ip{\rf^1}{\bx})\\
\ldots, \\
\exp(+\ip{\rf^Y}{\bx})\\
\exp(-\ip{\rf^Y}{\bx})
\end{array} \right],
\end{equation}
where $\rf^\alpha \sim \mathcal N(0,I_D) \forall \alpha \in \{1, \ldots, Y\}$  are the random projection vectors.\footnote{A technical detail here is that while we are using $Y$ random samples, we are actually developing a $2Y$-dimensional feature map $\bvphi:\R^D \to \R^{2Y}$ -- we can get a $Y$ dimensional feature map by dropping the $\sin(\cdot)$ terms in the trigonometric features (and add a random rotation term $b^\alpha, \alpha \in \iset{Y}$ to the $\cos(\ip{\rf^\alpha}{\bx} + b^\alpha)$ term), and the $\exp(-\cdot)$ term in the exponential features. This modification (using $2Y$ features instead of $Y$) reduces the variance of the kernel function approximation~\citep[Lemma 1, 2]{choromanski2020rethinking}. 
} A random feature map $\bvphi$ for the RBF kernel can be used for the EDP kernel by scaling $\bvphi(\bx)$ with $\exp(\|\bx\|_2^2/2)$. While the trigonometric features ensure that $k(\bx, \bx) = \ip{\bvphi(\bx)}{\bvphi(\bx)} = 1$, the exponential features ensure that $\bvphi(\bx) \in \R_+^{2Y}$, which is essential in certain applications as in transformers~\citep{choromanski2020rethinking, jain2023mnemosyne}. Furthermore, while the random samples $\rf^\alpha \sim \mathcal N(0, I_D)$ are supposed to be independent, \citet{choromanski2017unreasonable} show that the $\{\rf^1, \ldots, \rf^Y\}$ can be entangled to be exactly orthogonal to further reduce the variance of the approximation while maintaining unbiasedness. In general, the approximation of the random feature map is $O(\sqrt{D/Y})$, implying that a feature space with $Y \sim O(D/\epsilon^2)$ random features will ensure, with high probability, for any $\bx, \bx' \in \R^D$, $| k(\bx,\bx') - \ip{\bvphi(\bx)}{\bvphi(\bx')} | \leq \epsilon$. Scaling in the kernel functions such as $\exp(-\beta \|\bx - \bx'\|_2^2/2)$ or $\exp(\beta\ip{\bx}{\bx'})$ can be handled with the aforementioned random feature maps $\bvphi$ by applying them to $\sqrt{\beta} \bx$ with $\ip{\bvphi(\sqrt{\beta}\bx)}{\bvphi(\sqrt{\beta} \bx')} \approx \exp(-\beta \|\bx - \bx'\|_2^2/2)$.
\section{\drdam with Random Features}\label{sec:rfdam}
\begin{wrapfigure}[22]{R}{0.42\textwidth}
\vspace{-0.25in}
\begin{algorithm}[H]
\caption{Procedures for \drdam with random features.}%
\label{alg:kdam}
\DontPrintSemicolon
{\scriptsize
\SetKwProg{add}{RF(Seed $\tau$, Memory $\bxi$)}{}{end}
\SetKwProg{prep}{ProcMems(Seed $\tau$, Mems $\{\bxi^\mu, \mu \in \{1, \ldots, K\}$)}{}{end}
\SetKwProg{egrad}{GradComp(Seed $\tau$, Distributed mems $\bT$, Input $\bx$)}{}{end}
\add{}{
Initialize $\bp \gets 0_Y$\;
Set RNG $R$ seed to $\tau$\;
\For{$\alpha = 1, \ldots, Y$}{
  Sample random feature $\varphi_\alpha$ from $R$\;
  $p_\alpha \gets \varphi_\alpha(\bxi)$
}
\KwRet{$\bp$}
}
\prep{}{
Initialize $\bT \gets 0_Y$\;
\For{$\mu = 1, \ldots, K$}{
  $\bT \gets \bT + $RF($\tau$, $\bxi^\mu$)
}
\KwRet{$\bT$}
}
\egrad{}{
$\bp \gets$ RF($\tau, \bg(\bx)$) \;
Initialize $\bz \gets 0_D$\;
\tcp{Compute $\grad_\by \bvphi(\by)^\top \bT$ in $O(Y)$ mem}
\For{$i = 1, \ldots, D$}{
  Compute $\bu \gets d \bvphi(\by) / d y_i |_{\by = \bg(\bx)}$\;
  $z_i \gets \ip{\bu}{\bT}$
}
Initialize $\bz' \gets 0_D$\;
\tcp{Compute $\bz \grad_\bx \bg(\bx)$ in $O(D)$ mem}
\For{$i = 1, \ldots, D$}{
  Compute $\by \gets d \bg(\bx) / d x_i$\;
  $z'_i \gets \ip{\by}{\bz}$\;
}
Compute $q \gets - d Q(s) / ds |_{s=\ip{\bT}{\bp}}$\;
\KwRet{$q \bz'$}
}
}
\end{algorithm}
\end{wrapfigure}

Revisting the general energy function in \cref{DenseAM energy}, if we have available an explicit mapping $\bvphi: \R^D \to \R^Y$ such that $\ip{\bvphi(\bxi^\mu)}{\bvphi(\bx)} \approx F(S[\bxi^\mu, \bx])$, then we can simplify the general energy function in \cref{DenseAM energy} to
\begin{align}\label{eq:kdam-en}
E(\bx) \approx \hat E(\bx) 
= -Q \left(\sum_{\mu=1}^K \ip{\bvphi(\bxi^\mu)}{\bvphi(\bg(\bx))} \right)
= -Q \left(\ip{\sum_{\mu=1}^K \bvphi(\bxi^\mu)}{\bvphi(\bg(\bx))} \right).%
\end{align}
Denoting $\bT = \sum_\mu \bvphi(\bxi^\mu)$, we can write a simplified general update step for any input $\bx$ as:
\begin{equation}\label{eq:kdam-grad}
\nabla_\bx \hat E = - 
\left. \frac{d Q(s)}{ds } \right|_{s = \ip{\bvphi(\bg(\bx))}{\bT}} \cdot
\left( \left. \frac{\bvphi(\mathbf{z})}{d \mathbf{z}} \right|_{\mathbf{z}=\bg(\bx)}^\top \bT \right) \cdot
\frac{d \bg(\bx)}{d \bx} 
\end{equation}
where $d \bvphi(\bx) / d \bx \in \R^{Y \times D}$ is the gradient of the feature map with respect to its input. In the presence of such an explicit map $\bvphi$, we can distribute the memory in a \mrdam into the single $Y$-dimensional vector $\bT$, and be able to apply the update in \cref{eq:kdam-grad}. We can then use the random feature based energy gradient $\nabla_\bx \hat E(\bx)$ instead of the true energy gradient $\nabla_\bx  E(\bx)$ in the energy gradient descent step in \cref{eq:gen-up}.\footnotemark
\footnotetext{If $Q(\cdot) = \log(\cdot)$ in \cref{eq:kdam-en}, note that the inner product between unconstrained choices of $\bvphi$ can be negative but the argument to $\log$ must not be; thus, we clip the value to the $\log$ to some small $\varepsilon > 0$.} %
%
%
We name this scheme ``{\bf D}istributed {\bf r}epresentation for {\bf D}ense {\bf A}ssociative {\bf M}emory'' or \drdam, and we compare the computational costs of \drdam with the ``{\bf M}emory {\bf r}epresentation of {\bf D}ense {\bf A}ssociative {\bf M}emory'' or \mrdam in the following:
\begin{proposition}\label{prop:dam-egd-cc}
With access to the $K$ memories $\{ \bxi^\mu \in \R^D, \mu \in \iset{ K } \}$, \mrdam takes $O(LKD)$ time and $O(KD)$ peak memory for $L$ energy gradient descent steps (or layers) as defined in \cref{eq:gen-up} with the true energy gradient $\nabla_\bx E(\bx)$.
\end{proposition}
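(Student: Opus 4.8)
The plan is to account for the time and memory of a single energy-gradient-descent step from \cref{eq:gen-up} using the closed-form gradient in \cref{eq:gen-grad}, and then multiply through by the number of layers $L$; the memory bound follows by observing that no step allocates more scratch space than the $O(KD)$ already needed to hold the memories.

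First I would fix the per-step computation. Evaluating $\bg(\bx)$ costs $O(D)$ for the standard choices (sigmoid, layernorm, linear maps with structured Jacobian). Then, for each memory $\mu \in \iset{K}$, computing the scalar $S[\bxi^\mu, \bg(\bx)]$ costs $O(D)$ for a dot product or a Euclidean distance, so this inner loop is $O(KD)$; applying $F(\cdot)$ and $dF/ds$ and accumulating the running sum $y = \sum_\mu F(S[\bxi^\mu,\bg(\bx)])$ adds only $O(K)$, and evaluating $Q$ and $dQ/dy$ at the end is $O(1)$. The vector part of \cref{eq:gen-grad} is the weighted combination $\sum_\mu (dF/ds)\,(dS/d\bz)$; since $dS(\bxi^\mu,\bz)/d\bz$ is $D$-dimensional and itself computable in $O(D)$ time (it is $\bxi^\mu$ for the dot product, or $\bz - \bxi^\mu$ up to scaling for the distance), forming this sum is again $O(KD)$. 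Finally, left-multiplying by the Jacobian $d\bg(\bx)/d\bx$ is an $O(D)$ Jacobian–vector product for the structured $\bg$ under consideration (e.g.\ the identity-minus-rank-one form in the footnote for the $\ell_2$-normalization/layernorm case). Summing these contributions gives $O(KD)$ time per step, hence $O(LKD)$ over $L$ layers.

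For the memory bound, I would observe that besides the $O(KD)$ storage of $\{\bxi^\mu\}$ and the $O(D)$ storage of the current iterate $\bx^{(t)}$, the only additional working memory needed is an $O(D)$ accumulator for the gradient vector plus $O(1)$ scalar registers (or at most an $O(K)$ buffer, if one prefers to materialize all per-memory coefficients $dF/ds|_{s=S[\bxi^\mu,\bg(\bx)]}$ before combining them). Since all of these are dominated by $O(KD)$, and these buffers are reused across the $L$ layers rather than accumulated, the peak memory stays $O(KD)$.

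The main thing to be careful about is the $d\bg(\bx)/d\bx$ term: for an arbitrary vector function $\bg$ the Jacobian is $D \times D$ and a naive product costs $O(D^2)$ time and memory, which would break the claimed bound. The argument therefore relies on restricting to the $\bg$ used in practice — elementwise maps such as the sigmoid, layernorm, and linear maps with identity/low-rank-plus-diagonal Jacobian structure — for which the needed Jacobian–vector product is $O(D)$; I would flag this as the standing assumption under which the proposition holds. A secondary, minor point is to assume $F$, $Q$, and their first derivatives are evaluable in $O(1)$ time and $S$ in $O(D)$ time, which covers all the concrete choices listed after \cref{eq:gen-grad}.
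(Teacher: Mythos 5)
Your proposal is correct. The paper states this proposition without any proof (it is treated as immediate from inspecting the gradient formula in \cref{eq:gen-grad}), so your step-by-step accounting --- $O(KD)$ per descent step from the $K$ similarity evaluations and the weighted sum of the $D$-dimensional vectors $dS/d\bz$, times $L$ steps, with peak memory dominated by storing $\{\bxi^\mu\}$ --- is exactly the argument the authors are implicitly relying on. Your caveat about $d\bg(\bx)/d\bx$ is a genuinely useful addition rather than pedantry: the stated $O(LKD)$ bound does require that the Jacobian--vector product for $\bg$ be computable in $O(D)$ (elementwise or identity-plus-low-rank structure), since a dense $D\times D$ Jacobian would contribute an $O(LD^2)$ term not absorbed by $O(LKD)$ unless $K \gtrsim D$. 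Notably, the paper's own \drdam analysis does charge for this --- the GradComp loop in Algorithm~\ref{alg:kdam} computes $d\bg(\bx)/dx_i$ row by row at $O(D^2)$ total cost, which is where the $L(Y+D)\cdot D$ term in \cref{thm:kdam-egd-cc} comes from --- so flagging the structured-Jacobian assumption for \mrdam makes the two complexity statements consistent with each other.
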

Naively, the random feature based \drdam would require $O(DY)$ memory to store the random vectors and the $\nabla_\bx \bvphi(\bx)$ matrix. However, we can show that we can generate the random vectors on demand to reduce the overall peak memory to just $O(Y)$. The various procedures in \drdam are detailed in Algorithm~\ref{alg:kdam}. The RF subroutine generates the random feature for any memory or input. The ProcMems subroutine consolidates all the memories into a single $\bT \in \R^Y$ vector. The GradComp subroutine compute the gradient $\grad_\bx \hat E$.
The following are the computational complexities of these procedures:
\begin{proposition}\label{prop:rf-comp}
The RF subroutine in Algorithm~\ref{alg:kdam} takes $O(DY)$ time and $O(D+Y)$ peak memory.
\end{proposition}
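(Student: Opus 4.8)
The plan is to directly account for the time and memory consumed by the $Y$ iterations of the loop in the RF subroutine of \cref{alg:kdam}. Each iteration does two things: (i) draw the $\alpha$-th random feature, and (ii) evaluate it on the input $\bxi \in \R^D$. For the trigonometric and exponential feature maps of \cref{eq:rf-trig}, drawing the $\alpha$-th random feature amounts to sampling a single Gaussian projection vector $\rf^\alpha \sim \mathcal N(0, I_D)$ (plus, optionally, a scalar phase $b^\alpha$), which costs $O(D)$ time and $O(D)$ storage. Evaluating $\varphi_\alpha(\bxi)$ then reduces to forming the inner product $\ip{\rf^\alpha}{\bxi}$ and applying a constant number of scalar transcendental functions ($\cos/\sin$ or $\exp$), which is again $O(D)$ time. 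So each iteration is $O(D)$, the $Y$ iterations give $O(DY)$ time overall, and seeding the RNG is $O(1)$ and does not affect the bound.

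For peak memory, I would track the live state across the loop. The output accumulator $\bp$ occupies $O(Y)$ and the input $\bxi$ occupies $O(D)$. The one observation that requires any care is that the random projection vector $\rf^\alpha$ produced at iteration $\alpha$ is used only to compute $p_\alpha$ and can then be discarded (overwritten by $\rf^{\alpha+1}$ at the next iteration); hence we never hold more than one $D$-dimensional random vector at a time, and in particular never materialize the full $Y \times D$ matrix of random projections. The RNG state itself has size independent of $D$ and $Y$. Combining, the live memory at any point is $O(D) + O(Y) + O(1) = O(D+Y)$.

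The argument is elementary; the only point worth stating explicitly — and the closest thing to an ``obstacle'' — is this on-demand generation of the random features, which is precisely what separates the $O(D+Y)$ bound here from the naive $O(DY)$ one would get by precomputing and storing all random projections, and it is the same idea that later lets \drdam run its energy-descent iterations in $O(Y)$ working memory rather than $O(DY)$. Any factor-of-two discrepancy between using $Y$ samples and realizing a $2Y$-dimensional feature map (the trigonometric variant in \cref{eq:rf-trig}) is absorbed into the $O(\cdot)$ and does not change the statement.
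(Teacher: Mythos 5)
Your proof is correct and follows exactly the reasoning the paper relies on: the paper states \cref{prop:rf-comp} without a separate formal proof, but the surrounding text ("we can generate the random vectors on demand to reduce the overall peak memory") and the structure of the RF loop in Algorithm~\ref{alg:kdam} embody precisely your accounting of $O(D)$ work per iteration and a single live $D$-dimensional projection vector alongside the $O(Y)$ accumulator $\bp$. Nothing is missing.
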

\begin{proposition}\label{prop:prep-comp}
ProcMems in Algorithm~\ref{alg:kdam} takes $O(DYK)$ time and $O(D+Y)$ peak memory.
\end{proposition}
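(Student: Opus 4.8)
The plan is to obtain both bounds by unrolling the single loop in ProcMems and invoking the per-call guarantees for the RF subroutine established in Proposition~\ref{prop:rf-comp}, together with the trivial cost of a length-$Y$ vector addition.

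For the time complexity, I would argue as follows. ProcMems first initializes $\bT \gets 0_Y$ in $O(Y)$ time, and then performs $K$ iterations; in the $\mu$-th iteration it invokes RF$(\tau, \bxi^\mu)$, which costs $O(DY)$ time by Proposition~\ref{prop:rf-comp}, and then adds the returned vector in $\R^Y$ to the running accumulator $\bT$, costing an additional $O(Y)$ time. Summing, the total is $O(Y) + K\cdot\big(O(DY) + O(Y)\big) = O(DYK)$, since $Y = O(DY)$.

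For the peak memory, the key observation is that the only state persisting across iterations is the accumulator $\bT \in \R^Y$, requiring $O(Y)$ memory, and the running input memory $\bxi^\mu$, requiring $O(D)$. Within an iteration, the call to RF$(\tau, \bxi^\mu)$ uses $O(D+Y)$ peak memory by Proposition~\ref{prop:rf-comp} — this already accounts for holding $\bxi^\mu$, for the random projection vector sampled from the seeded RNG one row at a time (and reused across the $Y$ inner steps rather than stored), and for the output $\bp \in \R^Y$ — and this working memory is released before the next iteration begins; the subsequent update $\bT \gets \bT + \bp$ can reuse $\bT$ in place. Hence the peak over the whole procedure is $O(Y) + O(D+Y) = O(D+Y)$, under the (implicit) assumption that the $K$ memories are streamed one at a time rather than materialized jointly.

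The main obstacle is not really a calculation but a modeling point: the $O(D+Y)$ bound, as opposed to the naive $O(DY)$, rests entirely on the ``generate on demand'' design already baked into RF (Proposition~\ref{prop:rf-comp}) — the random projection vectors are never all resident at once — plus the discipline that ProcMems accumulates each RF output into $\bT$ immediately and never holds two of them simultaneously. Once Proposition~\ref{prop:rf-comp} is granted, the remainder is just the bookkeeping above, so I would keep the proof short and simply chain the two bounds.
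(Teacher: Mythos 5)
Your proof is correct and matches the intended argument: the paper itself states \cref{prop:prep-comp} without an explicit proof (only chaining it into the proof of \cref{thm:kdam-egd-cc} in \cref{asec:kdam-egd-proof}), and your unrolling of the $K$-iteration loop, invoking \cref{prop:rf-comp} per call and noting that only the $O(Y)$ accumulator $\bT$ persists across iterations, is exactly the reasoning the paper relies on. No gaps.
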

\begin{proposition}\label{prop:grad-comp}
GradComp in Algorithm~\ref{alg:kdam} takes $O(D(Y+D))$ time and $O(D+Y)$ peak memory.
\end{proposition}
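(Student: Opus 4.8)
The plan is to walk through the three stages of the GradComp subroutine in Algorithm~\ref{alg:kdam} in order, charge a time cost and a working-memory cost to each, and then take the sum of the times and the maximum of the footprints. Throughout I use the same unit-cost model implicit in the earlier propositions: arithmetic operations, a single draw from the seeded RNG, and evaluations of elementary functions ($\cos$, $\sin$, $\exp$, and the scalar derivatives $dQ/ds$, $dF/ds$, and the entries of $d\bg/d\bx$) each cost $O(1)$.

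First, the call $\bp \gets \mathrm{RF}(\tau, \bg(\bx))$ costs $O(DY)$ time and $O(D+Y)$ peak memory by \cref{prop:rf-comp}; the same pass that builds $\bp$ can, at no extra asymptotic cost, cache the $Y$ scalars $\ip{\rf^\alpha}{\bg(\bx)}$ in $O(Y)$ space. Second, consider the loop over $i \in \iset{D}$ that forms $\bz = (d\bvphi/d\by)^\top \bT$: the claim is that one iteration (assemble the Jacobian column $\bu = d\bvphi(\by)/dy_i \in \R^Y$, then take $\ip{\bu}{\bT}$) runs in $O(Y)$ time with $O(Y)$ transient memory, so the loop is $O(DY)$ time and uses only $O(Y)$ beyond the $O(D)$ accumulator $\bz$ and the input $\bT \in \R^Y$. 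Third, the loop over $i \in \iset{D}$ that forms $\bz' = (d\bg/d\bx)^\top \bz$: for the standard choices of $\bg$ (layernorm, coordinate-wise sigmoid, a fixed linear map) each column $d\bg(\bx)/dx_i \in \R^D$ is computable in $O(D)$ time after an $O(D)$ precomputation such as $\|\bx\|_2$, so each iteration is $O(D)$ and the loop is $O(D^2)$ time with $O(D)$ transient memory. Finally, $q \gets -dQ(s)/ds|_{s=\ip{\bT}{\bp}}$ costs $O(Y)$ for the inner product plus $O(1)$, and returning $q\bz'$ costs $O(D)$. Summing the times gives $O(DY) + O(DY) + O(D^2) + O(Y) + O(D) = O(D(Y+D))$; taking the maximum footprint across the stages gives peak memory $O(D+Y)$.

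The only step that needs genuine care is the $O(Y)$-time, $O(Y)$-memory assembly of a single Jacobian column $\bu = d\bvphi(\by)/dy_i$ without ever materializing the full $Y\times D$ Jacobian or storing the $DY$ random-projection entries; this is exactly what the ``compute $\grad_\by\bvphi(\by)^\top\bT$ in $O(Y)$ mem'' annotation in Algorithm~\ref{alg:kdam} asserts. For the trigonometric (resp.\ exponential) features of \cref{eq:rf-trig}, $\partial\varphi_\alpha/\partial y_i$ equals $\omega^\alpha_i$ times a function of the already-cached scalar $\ip{\rf^\alpha}{\by}$, so the cost of the column reduces to retrieving the coordinate slice $\{\omega^\alpha_i : \alpha \in \iset{Y}\}$, which an index-addressable seeded RNG produces in $O(Y)$. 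Equivalently, and without any assumption on the RNG beyond what \cref{prop:rf-comp} uses, one may reorder the computation to loop over $\alpha \in \iset{Y}$, regenerating each $\rf^\alpha \in \R^D$ once and accumulating $\bz \mathrel{+}= T_\alpha\,\grad_\by\varphi_\alpha(\by)$: this is again $O(DY)$ time and $O(D+Y)$ memory, and requires no cache. Once this is settled, the second loop is handled by the identical accounting with $d\bg(\bx)/dx_i$ in place of the feature-map column and no $Y$-sized object present, giving the $O(D)$-per-iteration bound, and the remaining bookkeeping is routine.
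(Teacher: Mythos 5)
Your accounting is correct and reaches the stated bounds; note that the paper states this proposition without writing out a proof (only \cref{thm:kdam-egd-cc} and \cref{thm:div} receive explicit proofs in the appendix), so your walk-through supplies exactly the justification the paper leaves implicit. The one place where you go beyond the paper is also the one place where care is genuinely needed: as literally written, the first loop of GradComp asks for the coordinate slice $\{\omega^\alpha_i : \alpha \in \iset{Y}\}$ at each $i$, and with a purely sequential seeded RNG that would cost $O(DY)$ per iteration, i.e.\ $O(D^2Y)$ overall. Your fix --- either assume an index-addressable (counter-based) RNG, or transpose the loop to run over $\alpha \in \iset{Y}$, regenerating each $\rf^\alpha \in \R^D$ once and accumulating $\bz \mathrel{+}= T_\alpha\,\grad_\by\varphi_\alpha(\by)$ in an $O(D)$ accumulator --- is sound, preserves the $O(DY)$ time and $O(D+Y)$ peak-memory claims, and is consistent with the paper's ``generate the random vectors on demand'' remark; the transposed variant is arguably the cleaner reading of what the algorithm's $O(Y)$-memory annotation intends. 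The rest (the $O(D^2)$ second loop under the stated assumption that each column of $d\bg/d\bx$ is $O(D)$-computable, and the $O(Y)$ final inner product) is routine and matches the proposition.
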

Thus, the computational complexities of \drdam neural dynamics are (see \cref{asec:kdam-egd-proof} for proof and discussions):
\begin{theorem}\label{thm:kdam-egd-cc}
With a random feature map $\bvphi$ utilizing $Y$ random projections $\{\varphi_\alpha, \alpha \in \{1, \ldots, Y \} \}$ and $K$ memories $\{ \bxi^\mu \in \R^D, \mu \in \{1, \ldots, K \} \}$, the random-feature based \drdam takes $O\left( D \left(YK+L(Y+D) \right) \right)$ time and $O(Y+D)$ peak memory for $L$ energy gradient descent steps (or layers) as defined in \cref{eq:gen-up} with the random feature based approximation gradient $\nabla_\bx \hat E(\bx)$ defined in \cref{eq:kdam-grad}.
\end{theorem}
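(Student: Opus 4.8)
The plan is to obtain Theorem~\ref{thm:kdam-egd-cc} by composing Propositions~\ref{prop:rf-comp}, \ref{prop:prep-comp}, and~\ref{prop:grad-comp}, which already bound the three building blocks (\texttt{RF}, \texttt{ProcMems}, \texttt{GradComp}) of \drdam, and then carefully accounting for which quantities persist in memory across the $L$ iterations versus which are transient scratch space.

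First I would describe the full \drdam pipeline as: (i) one call to \texttt{ProcMems} to consolidate the $K$ memories into the single vector $\bT \in \R^Y$; and (ii) an $L$-fold loop, where each iteration calls \texttt{GradComp} on the current particle $\bx^{(t-1)}$ and $\bT$ to obtain $\grad_\bx \hat E^{(t-1)}$ as in \cref{eq:kdam-grad}, followed by the vector update $\bx^{(t)} \gets \bx^{(t-1)} - \eta^{(t-1)} \grad_\bx \hat E^{(t-1)}$ of \cref{eq:gen-up}, which costs $O(D)$ time and $O(D)$ memory. For the time bound I then sum: $O(DYK)$ for step (i) by Proposition~\ref{prop:prep-comp}; then $L$ copies of $\bigl(O(D(Y+D)) + O(D)\bigr)$ for step (ii) by Proposition~\ref{prop:grad-comp}, which is $O(LD(Y+D))$; the total is $O(DYK) + O(LD(Y+D)) = O\!\left(D(YK + L(Y+D))\right)$, matching the claimed time complexity.

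The key point for the memory bound — and the part I expect to require the most care — is arguing that peak memory stays $O(Y+D)$ rather than the naive $O(YD)$ that would result from materializing the $Y$ random projection vectors $\{\rf^\alpha\}$ or the Jacobian $d\bvphi(\bx)/d\bx \in \R^{Y\times D}$ in full. Here I would note that \texttt{RF} (Proposition~\ref{prop:rf-comp}) re-seeds the RNG with $\tau$ and draws each $\rf^\alpha$ one at a time, uses it to update a running accumulator, and discards it, so it never holds more than $O(D+Y)$ at once; the same on-demand regeneration of the random projections is used inside \texttt{GradComp} to form the matrix--vector products $d\bvphi(\by)^\top\bT$ and $\bz\,(d\bg(\bx)/d\bx)$ without ever materializing a $Y\times D$ matrix, so each invocation uses $O(D+Y)$ scratch (Proposition~\ref{prop:grad-comp}). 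Since the only quantities that persist across the $L$ iterations are $\bT \in \R^Y$, the current particle $\bx \in \R^D$, and the constant-size seed $\tau$, while each subroutine call allocates and then releases $O(D+Y)$ scratch, the peak memory is $O(D+Y)$.

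Finally I would remark that this establishes the advertised separation from \mrdam (Proposition~\ref{prop:dam-egd-cc}): \drdam replaces the $O(KD)$ footprint of storing the raw memories with an $O(Y+D)$ footprint independent of $K$, paying a one-time $O(DYK)$ preprocessing cost and a per-step cost of $O(D(Y+D))$ instead of $O(KD)$ — advantageous when $K$ is large relative to the feature dimension $Y$ and ambient dimension $D$. No genuinely hard estimate is involved; the only subtlety is the bookkeeping that distinguishes total work from peak memory and the observation that on-demand sampling keeps the random features from ever being stored en masse.
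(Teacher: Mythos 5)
Your proof is correct and follows essentially the same route as the paper's: a single call to ProcMems costing $O(DYK)$ time, followed by $L$ calls to GradComp costing $O(LD(Y+D))$ time, with the peak memory being the maximum, $O(Y+D)$, of the two subroutines' footprints. Your additional remarks on on-demand regeneration of the random projections and the comparison with \mrdam restate points the paper makes in the surrounding text and appendix discussion rather than in the proof itself, but they do not change the argument.
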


However, note that the memory encoding only needs to be done once, while the same $\bT$ can be utilized for $L$ steps of energy gradient steps for multiple input, and the cost of ProcMems is amortized over these multiple inputs. We also show that the computational costs of the inclusion of a new memories $\bxi$:
\begin{proposition} \label{prop:kdam-new-mem}
The inclusion of a new memory $\bxi \in \R^D$ to a \drdam with $K$ memories distributed in $\bT \in \R^Y$ takes $O(DY)$ time and $O(D+Y)$ peak memory.
\end{proposition}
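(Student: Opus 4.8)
The plan is to exploit the fact that the distributed representation is purely additive: the memory tensor is $\bT = \sum_{\mu=1}^{K} \bvphi(\bxi^\mu)$, so incorporating a new memory $\bxi$ is nothing more than the single update $\bT \gets \bT + \bvphi(\bxi)$. In particular, no access to, and no recomputation over, the $K$ previously stored memories is required; only the running $Y$-vector $\bT$ and the featurization of the single new pattern are needed.

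Concretely, I would first invoke Proposition~\ref{prop:rf-comp}: evaluating $\bp \gets \textsf{RF}(\tau, \bxi)$ via the \textsf{RF} subroutine of Algorithm~\ref{alg:kdam}, using the \emph{same} seed $\tau$ that generated the current $\bT$, takes $O(DY)$ time and $O(D+Y)$ peak memory, since the $Y$ random projection vectors $\rf^\alpha \in \R^D$ are regenerated on demand from the RNG seeded by $\tau$, each one yielding a single coordinate $\varphi_\alpha(\bxi)$ in $O(D)$ time with $O(D)$ working memory, while only $\bp \in \R^Y$ is retained across iterations. Then I would observe that the concluding step $\bT \gets \bT + \bp$ costs $O(Y)$ time and $O(Y)$ memory. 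Adding up, the total time is $O(DY) + O(Y) = O(DY)$, and the peak memory, which must simultaneously hold $\bT$, $\bp$, and one projection vector $\rf^\alpha$, is $O(Y) + O(D) = O(D+Y)$, establishing the claim; note also that the parameter footprint of the network (the single vector $\bT$) is unchanged, consistent with the constant-size promise of \drdam.

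There is no real obstacle here — the argument is a direct bookkeeping reduction to Proposition~\ref{prop:rf-comp} plus one vector addition. The only point worth stating explicitly is the \emph{consistency} requirement: the new memory must be featurized with the identical realization of $\{\rf^1,\ldots,\rf^Y\}$ (and, if used, the identical orthogonal entanglement thereof) that produced the current $\bT$, which is precisely what reusing the seed $\tau$ guarantees; streaming this regeneration rather than materializing the full $Y\times D$ projection block is what keeps the peak memory at $O(D+Y)$ rather than $O(DY)$.
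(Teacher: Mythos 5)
Your proposal is correct and is exactly the argument the paper intends: the update is one iteration of the \textsf{ProcMems} loop body, $\bT \gets \bT + \textsf{RF}(\tau, \bxi)$, whose cost is given by \cref{prop:rf-comp} plus an $O(Y)$ vector addition. The paper does not spell this out beyond Algorithm~\ref{alg:kdam}, and your additional remark about reusing the seed $\tau$ so that the new memory is featurized with the same realization of $\{\rf^1,\ldots,\rf^Y\}$ is a worthwhile point of explicitness, not a deviation.
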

The above result shows that inclusion of new memories correspond to constant time and memory irrespective of the number of memories in the current DenseAM.
Next, we study the divergence between the output of a $L$-layered \mrdam using the energy descent in \cref{eq:gen-up} with the true gradient in \cref{eq:gen-grad} and that of \drdam using the random feature based gradient in \cref{eq:kdam-grad}.
\begin{theorem}\label{thm:div}
Consider the following energy function with $K$ memories $\{\bxi^\mu \in \R^D, \mu \in \{1, \ldots, K\} \}$ and inverse temperature $\beta > 0$:
\begin{equation}\label{eq:div-en}
E(\bx) = -\frac{1}{\beta} \log \left( 
\sum_{\mu=1}^K \exp(-\nicefrac{\beta}{2} \| \bxi^\mu - \bx \|_2^2)
\right).
\end{equation}
We further make the following assumptions: (A1)~All memories $\bxi^\mu$ and inputs $\bx$ lie in $\mathcal X = [0, \nicefrac{1}{\sqrt{D}}]^D$. (A2)~Using a random feature map $\bvphi: \R^D \to \R^{Y}$ using $Y$ random feature maps, for any $\bx, \bx' \in \R^d$ there is a constant $C_1>0$ such that $\left|\exp(\|\bx-\bx'\|_2^2/2) - \ip{\bvphi(\bx)}{\bvphi(\bx')}\right| \leq C_1 \sqrt{\nicefrac{D}{Y}}$.
Given an input $\bx \in \mathcal X$, let $\bx^{(L)}$ be the output of the \mrdam defined by the energy function in \cref{eq:div-en} using the true energy gradient in \cref{eq:gen-grad} and $\hat \bx^{(L)}$ be the output of \drdam with approximate gradient in \cref{eq:kdam-grad} using the random feature map $\bvphi$ using a constant step-size of $\eta > 0$ in \eqref{eq:gen-up}. Then
\begin{equation}\label{eq:div-ub}
\left \| \bx^{(L)} - \hat \bx^{(L)} \right\|_2 \leq
2 \eta L C_1 K e^{\beta E(\bx)} \sqrt{D/Y}
\left( \frac{
1 - \left(\eta L (1+2K\beta e^{\beta/2}) \right)^L
}{
1 - \eta L (1+2K\beta e^{\beta/2})
} \right)
\end{equation}
\end{theorem}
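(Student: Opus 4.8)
The plan is to bound the per-step divergence between the exact trajectory $\bx^{(t)}$ and the approximate trajectory $\hat\bx^{(t)}$, and then accumulate these errors over $L$ layers via a discrete Grönwall-type recursion. Write $\delta^{(t)} = \|\bx^{(t)} - \hat\bx^{(t)}\|_2$. From the common update rule \eqref{eq:gen-up} with constant step-size $\eta$,
\begin{equation}\label{eq:div-plan-decomp}
\bx^{(t)} - \hat\bx^{(t)} = \left(\bx^{(t-1)} - \hat\bx^{(t-1)}\right) - \eta\left(\nabla_\bx E(\bx^{(t-1)}) - \nabla_\bx \hat E(\hat\bx^{(t-1)})\right),
\end{equation}
so the triangle inequality gives $\delta^{(t)} \le \delta^{(t-1)} + \eta\|\nabla E(\bx^{(t-1)}) - \nabla E(\hat\bx^{(t-1)})\|_2 + \eta\|\nabla E(\hat\bx^{(t-1)}) - \nabla\hat E(\hat\bx^{(t-1)})\|_2$. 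The first gradient-difference term is handled by a Lipschitz bound on $\nabla_\bx E$ for the energy in \eqref{eq:div-en} restricted to the compact box $\mathcal X$ — I would compute the Hessian of $E$ explicitly (it is a softmax-weighted covariance-like expression in the memories) and bound its operator norm on $\mathcal X$; this is where the $(1 + 2K\beta e^{\beta/2})$ factor should emerge, using that all coordinates of memories and inputs lie in $[0,1/\sqrt D]$ so that $\|\bxi^\mu - \bx\|_2 \le 1$ and the exponents are controlled. The second term is the pointwise gradient approximation error coming purely from the random-feature substitution.

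The main technical content is bounding this pointwise error $\|\nabla_\bx E(\bx) - \nabla_\bx \hat E(\bx)\|_2$ using assumption (A2). I would first rewrite $E$ in \eqref{eq:div-en} in the exponentiated-dot-product form: since $\exp(-\tfrac\beta2\|\bxi^\mu-\bx\|_2^2) = \exp(-\tfrac\beta2\|\bxi^\mu\|_2^2)\exp(-\tfrac\beta2\|\bx\|_2^2)\exp(\beta\ip{\bxi^\mu}{\bx})$, and the feature map $\bvphi$ applied to $\sqrt\beta\,\bx$ (suitably scaled) approximates this, with error $C_1\sqrt{D/Y}$ per pair. Then $\hat E(\bx) = -\tfrac1\beta\log\ip{\sum_\mu\bvphi(\bxi^\mu)}{\bvphi(\bx)}$, and the numerator/denominator of $\nabla\hat E$ versus $\nabla E$ differ by sums of $K$ terms each of size $O(C_1\sqrt{D/Y})$. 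Dividing by the partition function $Z(\bx) = \sum_\mu\exp(-\tfrac\beta2\|\bxi^\mu-\bx\|_2^2)$ and observing that $Z(\bx) = e^{-\beta E(\bx)}$, i.e. $1/Z(\bx) = e^{\beta E(\bx)}$, produces the prefactor $C_1 K e^{\beta E(\bx)}\sqrt{D/Y}$. I expect a subtlety in tracking how the error in the scalar $\tfrac{dQ}{ds}$ factor and the error in the vector $\bvphi(\bz)^\top\bT$ factor combine multiplicatively, and in ensuring the bound is stated in terms of $E(\bx)$ at the \emph{initial} point rather than along the trajectory — this should follow since the energy is non-increasing under exact descent for small enough $\eta$, so $E(\bx^{(t)}) \le E(\bx)$, but one must argue the same for the quantities that actually appear; alternatively, the $e^{\beta/2}$ crudely upper-bounds $1/Z$ on $\mathcal X$ and can absorb this.

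Finally I would assemble the recursion. Writing $a = \eta(1 + 2K\beta e^{\beta/2})$ for the contraction-like constant from the Lipschitz term and $b = 2\eta C_1 K e^{\beta E(\bx)}\sqrt{D/Y}$ for the per-step random-feature error, the recursion is $\delta^{(t)} \le (1+a)\delta^{(t-1)} + b$ with $\delta^{(0)} = 0$, which unrolls to $\delta^{(L)} \le b\sum_{j=0}^{L-1}(1+a)^j = b\frac{(1+a)^L - 1}{a}$. Matching this against the stated bound \eqref{eq:div-ub} requires the crude relaxation $1 + a \le \eta L(1 + 2K\beta e^{\beta/2})$ (valid once, say, $\eta L \ge$ some constant, or by simply using $L \ge 1$ and re-deriving the geometric sum with ratio $\eta L(1+2K\beta e^{\beta/2})$ in place of $1+a$), giving the factor $\bigl(1 - (\eta L(1+2K\beta e^{\beta/2}))^L\bigr)/\bigl(1 - \eta L(1+2K\beta e^{\beta/2})\bigr)$; I suspect the paper's bound is phrased for the regime where $\eta L(1+2K\beta e^{\beta/2}) < 1$, making this a genuine geometric series, and the substitution of $a \mapsto \eta L(1+2K\beta e^{\beta/2})$ is a deliberate (loose) simplification to get a clean closed form. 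The hardest step is the explicit Hessian/Lipschitz bound for $\nabla_\bx E$ on $\mathcal X$, since that is where the precise constant $(1+2K\beta e^{\beta/2})$ must be extracted; the random-feature error propagation, while notationally heavy through the chain rule in \eqref{eq:gen-grad} and \eqref{eq:kdam-grad}, is conceptually routine given (A1)–(A2).
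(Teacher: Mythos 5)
Your proposal follows the paper's proof almost exactly: the same per-step decomposition into a Lipschitz term for the exact gradient plus a pointwise random-feature error of size $2C_1 K e^{\beta E(\bx)}\sqrt{D/Y}$ (obtained via $1/Z(\bx) = K e^{\beta E(\bx)}$ and the fact that the approximate softmax-average of memories stays in their convex hull), followed by the same Gr\"onwall-style unrolling and the same deliberate loosening to a geometric series with ratio $\eta L(1+2K\beta e^{\beta/2})$. The only divergence is that the paper extracts the Lipschitz constant not from the Hessian but from an $\ell_1$-stability lemma for $\softmax$ applied to $\nabla_\bx E(\bx) = \bx - \softmax(-\tfrac{\beta}{2}\|\bx-\Xi\|_2^2)\,\Xi$ — which is precisely where the loose factor $2K\beta e^{\beta/2}$ comes from — whereas your Hessian route would give a tighter constant (roughly $1+\beta$) that still implies the stated bound; you are also right to flag, and the paper glosses over, that applying the pointwise error bound along the trajectory at $\hat\bx^{(t-1)}$ requires either monotonicity of the energy or the uniform bound $e^{\beta E} \leq e^{\beta/2}/K$.
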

Assumption (A1) just ensures that all the memories and inputs have bounded norm, and can be achieved via translating and scaling the memories and inputs. 
Assumption (A2) pertains to the approximation introduced in the kernel function evaluation with the random feature map, and is satisfied (with high probability) based on results such as \citet[Claim 1]{rahimi2007random} and \citet[Theorem 4]{choromanski2020rethinking}. The above result precisely characterizes the effect on the divergence $\|\bx^{(L)} - \hat \bx^{(L)} \|$ of the (i)~initial energy of the input $E(\bx)$ -- lower is better, (ii)~the inverse temperature $\beta$ -- lower is better, (iii)~the number of memories $K$ -- lower is better, (iv)~the ambient data dimensionality $D$ -- lower is better, (v)~the number of random features $Y$ -- higher is better, and (vi)~the number of layers $L$ -- lower is better.
The proof and further discussion are provided in \cref{asec:div-bnd-proof}.
Note that \cref{thm:div} analyzes the discretized system, but as the step-size $\eta \to 0$, we approach the fully contracting continuous model. An appropriate choice for the energy descent step-size $\eta$ simplifies the above result, bounding the divergence to $O(\sqrt{D/Y})$:
\begin{corollary}\label{cor:lrate-div-en}
Under the conditions and definitions of \cref{thm:div}, if we set the step size $\eta = \frac{C_2}{L(1+2K \beta e^{\beta/2})}$ with $C_2 < 1$, the divergence is bounded as:
\begin{equation}\label{eq:lrate-div-ub}
\left \| \bx^{(L)} - \hat \bx^{(L)} \right\|_2 \leq
\frac{C_1 C_2 e^{\beta(E(\bx) - 1/2)}}{\beta (1-C_2)} \sqrt{D/Y}.
\end{equation}
\end{corollary}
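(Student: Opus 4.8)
The plan is to obtain Corollary~\ref{cor:lrate-div-en} as a direct specialization of Theorem~\ref{thm:div}: the bound~\eqref{eq:lrate-div-ub} is simply~\eqref{eq:div-ub} after substituting the prescribed step size $\eta = C_2 / (L(1 + 2K\beta e^{\beta/2}))$ and performing elementary simplifications. All the analytic work — the recursion controlling the per-step divergence, the Lipschitz estimates on the update map, and the random-feature approximation error from assumption (A2) — is already carried out in Theorem~\ref{thm:div}; the corollary introduces no new estimate.

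Concretely I would proceed in three short steps. First, observe that with this choice of $\eta$ the quantity $\eta L (1 + 2K\beta e^{\beta/2})$ appearing inside the geometric-series factor of~\eqref{eq:div-ub} equals exactly $C_2$, so that factor collapses to $(1 - C_2^L)/(1 - C_2)$; since $0 < C_2 < 1$ and $L \geq 1$ we have $0 < 1 - C_2^L \leq 1$, hence this factor is at most $1/(1-C_2)$. Second, substitute the same $\eta$ into the prefactor $2\eta L C_1 K e^{\beta E(\bx)}\sqrt{D/Y}$, which becomes $\tfrac{2 C_1 C_2 K}{1 + 2K\beta e^{\beta/2}}\, e^{\beta E(\bx)} \sqrt{D/Y}$. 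Third, use the trivial inequality $2K\beta e^{\beta/2} \leq 1 + 2K\beta e^{\beta/2}$, equivalently $\tfrac{2K}{1 + 2K\beta e^{\beta/2}} \leq \tfrac{1}{\beta e^{\beta/2}} = \tfrac{e^{-\beta/2}}{\beta}$, to replace the $K$-dependent factor; combining this with $e^{\beta E(\bx)} e^{-\beta/2} = e^{\beta(E(\bx) - 1/2)}$ and with the factor $1/(1-C_2)$ from the first step yields precisely the right-hand side of~\eqref{eq:lrate-div-ub}.

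The only point needing a sentence of care is the admissibility of the chosen step size: for the geometric-series manipulation in Theorem~\ref{thm:div} (and the underlying contraction argument) to be meaningful one needs $\eta L (1 + 2K\beta e^{\beta/2}) < 1$, which is exactly what the hypothesis $C_2 < 1$ guarantees — this is the reason for the constant restriction. There is no substantive obstacle here; the genuinely hard work was establishing Theorem~\ref{thm:div}, and the corollary is essentially a one-line substitution, made clean by the two elementary bounds $1 - C_2^L \leq 1$ and $2K\beta e^{\beta/2} \leq 1 + 2K\beta e^{\beta/2}$.
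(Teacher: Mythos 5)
Your proposal is correct and matches the intended derivation: the paper states the corollary without a separate proof precisely because it follows from \cref{thm:div} by substituting $\eta = C_2/(L(1+2K\beta e^{\beta/2}))$, collapsing the geometric factor to $(1-C_2^L)/(1-C_2) \le 1/(1-C_2)$, and using $2K/(1+2K\beta e^{\beta/2}) \le e^{-\beta/2}/\beta$. Your three steps reproduce exactly this substitution and the two elementary bounds, so there is nothing to add.
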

These above results can be extended to the EDP based energy function $E(\bx) = -\nicefrac{1}{\beta} \log \sum_{\mu} \exp(\beta\ip{\bxi^\mu}{\bx}) + \nicefrac{1}{2}\|\bx \|_2^2$ using the same proof technique.
\section{Empirical evaluation}\label{sec:eval}

To be an accurate approximation of the traditional \mrdam, \drdam must empirically satisfy the following desiderata for all possible queries and at all configurations for inverse temperature $\beta$ and pattern dimension $D$: 
\begin{enumerate}
\item[\textbf{(D1)}] for the same query, \drdam must predict similar energies and energy gradients as \mrdam; and
\item[\textbf{(D2)}] for the same initial query, \drdam must retrieve similar fixed points as \mrdam.
\end{enumerate}
However, in our experiments we observed that the approximation quality of \drdam is strongly affected by the choice of $\beta$ and that the approximation quality decreases the further the query patterns are from the stored memory patterns, as predicted by \cref{thm:div}.
We characterize this behavior in the following experiments using the trigonometric ``SinCos'' basis function, which performed best in our ablation experiments (see \cref{sec:ablation-basis-func}), but note that the choice of the random features do play a significant role in the interpretations of these results.

\subsection{(D1) How accurate are the energies and gradients of \drdam?}
\label{sec:quant1b}

\Cref{fig:quant1b} evaluates how well \drdam, configured at different feature sizes $Y$, approximates the energy and energy gradients of \mrdam configured with different inverse temperatures $\beta$ and storing random binary patterns of dimension $D$. The experimental setup is described below.

We generated $2K=1000$ unique, binary patterns (where each value is normalized to be $\{0, \frac{1}{\sqrt{D}}\})$ and stored $K=500$ of them into the memory matrix $\Xi$ of \mrdam. We denote these stored patterns as $\bxi^\mu \in \{0, \frac{1}{\sqrt{D}}\}^D$, $\mu \in \iset{K}$, where $D$ is a hyperparameter controlled by the experiment. 
For a given $\beta$, the memory matrix is converted into the featurized memory vector $T_\alpha := \sum_\mu \varphi_\alpha(\bxi^\mu)$ from \cref{eq:kdam-en}, where $\alpha \in \iset{2Y}$.
The remaining patterns are treated as the ``random queries'' $\bx^b_\text{far}$, $b \in \iset{K}$ (i.e., queries that are far from the stored patterns). Finally, in addition to evaluating the energy at these random queries and at the stored patterns, we also want to evaluate the energy at queries $\bx^b_\text{near}$ that are ``near'' the stored patterns; thus, we take each stored pattern $\bxi^\mu$ and perform bit-flips on $0.1D$ of its entries.

For each set of queries $\bx^b \in \{\bxi^b, \bx^b_\text{near}, \bx^b_\text{far}\}$, $b \in \iset{K}$, and choice of $\beta$, $Y$, and $D$, we compute the {\bf M}ean {\bf A}pproximation {\bf E}rror (MAE) between \mrdam's energy $E_b := E(\bx^b; \beta, \Xi)$ (whose gradient matrix is denoted $\nabla_{\bx} E_{b}$) and \drdam's energy $\hat{E}_b := \hat{E}(\bx^b; \beta, \bT)$ (whose gradient is denoted $\nabla_{\bx} \hat{E}_{b}$).

\begin{equation}
\label{eq:MAE}
    \text{MAE}_\text{Energy} = \frac{1}{K} \sum\limits_{b \in \iset{K}} \left \lvert E_b - \hat{E_b} \right \rvert
    \text{\color{gray},\; and} \;\;\;
    \text{MAE}_\text{Gradient} = \frac{1}{K} \sum\limits_{b \in \iset{K}} \norm{\nabla_{\bx} E_{b} - \nabla_{\bx} \hat{E}_{b}}_2
\end{equation}

We found it useful to visualize the results using log-scale and to compare the errors against the expected error of a ``random guess'' of the energy/gradients (horizontal red dashed line in each plot of \cref{fig:quant1b}). The ``random guess error'' was empirically computed by sampling a new set of random queries $\bx^b_\text{guess}$, $b \in \iset{K}$ (independent of the reference queries) and computing the MAE between the standard energy on the reference queries vs. the approximate energies on the random queries. This error was averaged across $Y$ for each $\beta$; the highest average error across all $\beta$s is plotted.

\begin{figure}[t]
    \centering
\includegraphics[width=\textwidth]{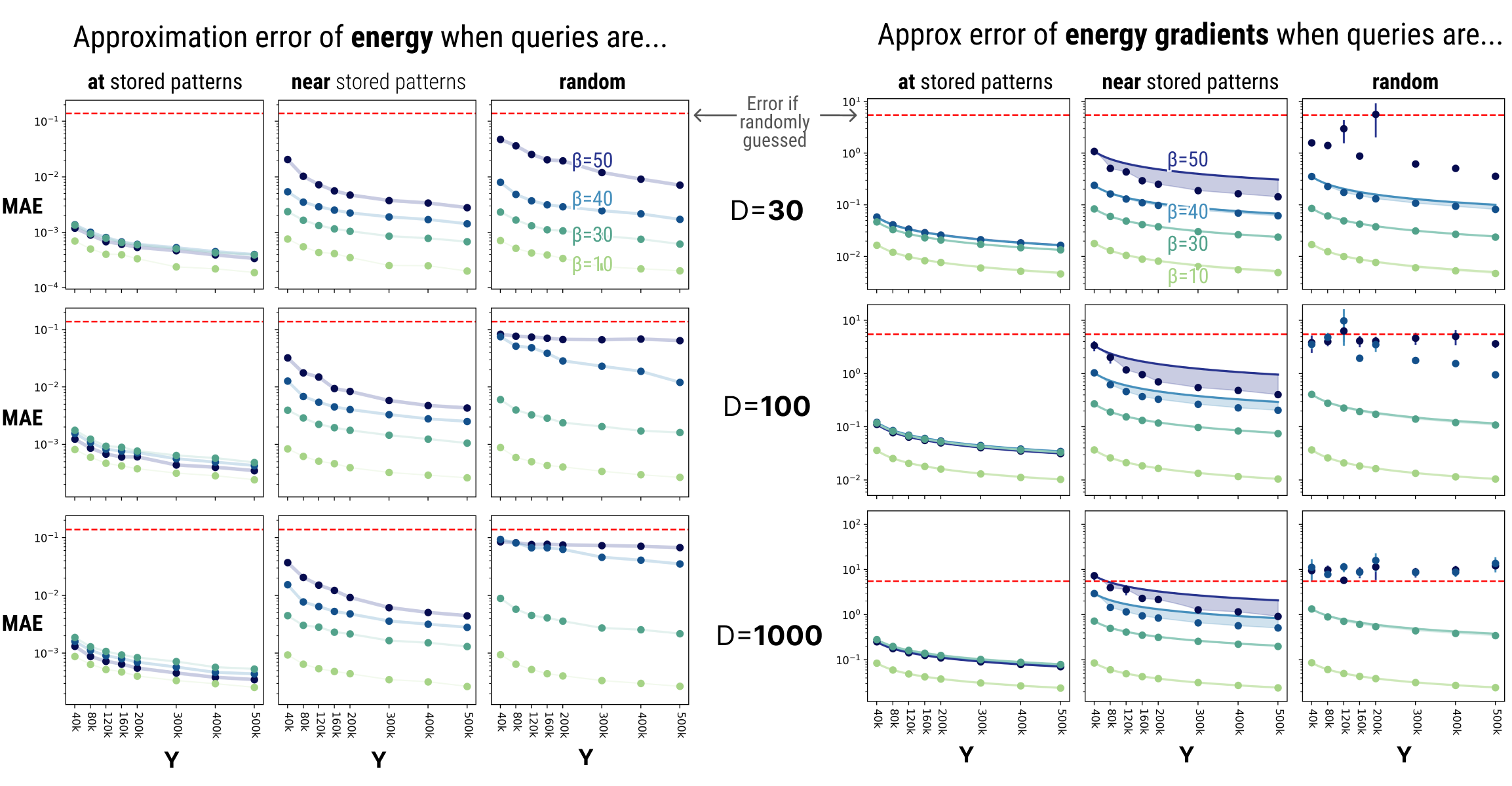}
\vskip -0.05in
    \caption{\drdam produces better approximations to the energies and gradients of \mrdam when the queries are closer to the stored patterns.  Approximation quality improves with larger feature dimension $Y$, but decreases with higher $\beta$ and higher pattern dimension $D$. Approximation error is computed on $500$ stored binary patterns normalized between $\{0, \frac{1}{\sqrt{D}}\}$. The {\bf M}ean {\bf A}pproximation {\bf E}rrors ({\bf MAE}, \cref{eq:MAE}) is taken over $500$ queries initialized: \textbf{at} stored patterns (i.e., queries equal the stored patterns), \textbf{near} stored patterns (i.e., queries equal the stored patterns where 10\% of the bits have been flipped), and \textbf{randomly} (i.e., queries are random and \textbf{far} from stored patterns). Error bars represent the standard error of the mean but are visible only at poor approximations. Red horizontal lines represent the expected error of random energies and gradients. The theoretical error upper bounds of \cref{eq:lrate-div-ub} (dark curves on the gradient errors in the \textit{right plot only}) show a tight fit to empirical results at low $\beta$ and $D$ and are only shown if predictions are ``better than random''. The shaded area shows the difference between the theoretical bound and the empirical results.
    }
    \vspace{-0.6cm}
    \label{fig:quant1b}
\end{figure}

\vspace{-0.3cm}
\paragraph{Observation 1: \drdam approximations are best for queries near stored patterns} \drdam approximations for both the energy and energy gradients are better the closer the query patterns are to the stored patterns. In this regime, approximation accuracy predictably improves when increasing the value for $Y$ within ``reasonable'' values (i.e., values corresponding into sizes of featurized queries and memories that can operate within 46GB of GPU memory). 

\vspace{-0.3cm}
\paragraph{Observation 2: \drdam approximations worsen as inverse temperature $\beta$ increases} Across nearly all experiments, \drdam approximations worsen as $\beta$ increases. At queries near the stored patterns, $\beta = 50$ has an energy error approximately $10\times$ that of $\beta=30$ and $100\times$ that of $\beta=10$ across all $Y$. At high $D$ and when queries are far from the patterns, the error of $\beta=50$ approaches $1000\times$ the error of $\beta=10$. This observation similarly holds for the errors of corresponding gradients, corroborating the statement of \cref{thm:div}.

\vspace{-0.3cm}
\paragraph{Observation 3: \drdam approximations break at sufficiently high values of $D$ and $\beta$} In general, \drdam's approximation errors remain the same across choices for $D$, especially when the queries are near the stored patterns. However, when both $\beta$ and $D$ are sufficiently large (e.g., $\beta \geq 40$ and $D \geq 100$ in \cref{fig:quant1b}), increasing the value of $Y$ does not improve the approximation quality: \drdam continues to return almost random gradients and energies. We explore this phenomenon more in \cref{sec:exp-retrieval} in the context of the retrievability of stored patterns.

\subsection{(D2) How accurate are the memory retrievals using \drdam?}
\label{sec:exp-retrieval}

\textit{Memory retrieval} is the process by which an initial query $\bx^{(0)}$ descends the energy function and is transformed into a fixed point of the energy dynamics. This process can be described by the discrete update rule in \cref{eq:gen-up}, 
where $E$ can represent either \mrdam's energy or the approximate energy of 
\drdam. A memory is said to be ``retrieved'' when $|E(\bx^{(L)}) - E(\bx^{(L-1)})| < \varepsilon$ for some small $\varepsilon > 0$, at which point $\bx^{(L-1)} \approx \bx^{(L)} =: \bx^\star$ is declared to be the retrieved \textit{memory} after $L$ iterations because $\bx^\star$ lives at a local minimum of the energy function $E$.

\begin{figure}[t]
\centering
\includegraphics[width=\textwidth]{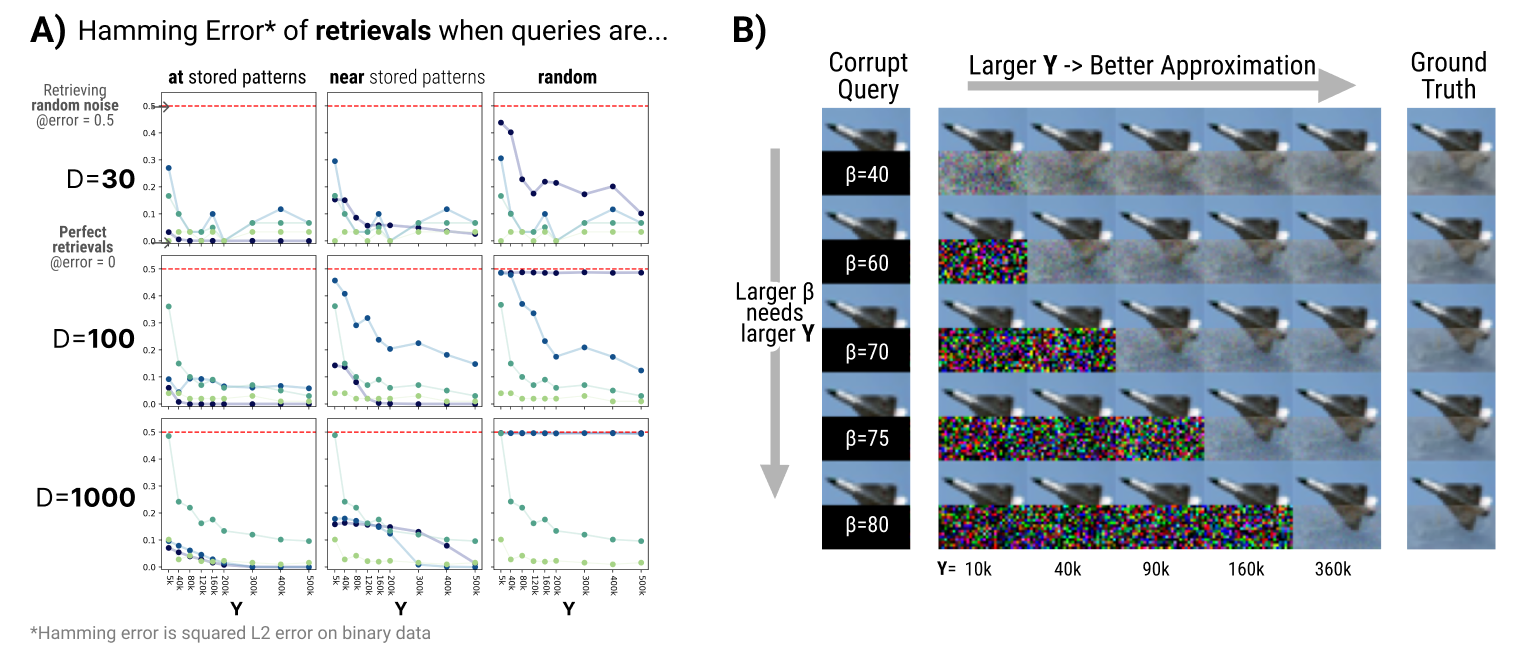}
    \caption{\textbf{A)} Retrieval errors predictably follow the approximation quality of \cref{fig:quant1b}. Error is lowest at/near stored patterns but is completely random when energy and gradient approximations are poor, i.e., at high values of $\beta$ and $D$. Note that error improves across $Y$ but follows a different (and noisier) trace than the corresponding approximations for energy and gradient in \cref{fig:quant1b} due to error accumulating over multiple update steps. \textbf{B)} \drdam's approximation quality improves as $Y$ increases (visible at low $\beta$), but larger $Y$'s are needed for good approximations to the DAM's fixed points at higher $\beta$'s. (Left) The same corrupted query from CIFAR-10 where bottom 50\% is masked is presented to DAM's with different $\beta$'s. (Middle) The fixed points of \drdam for each $\beta$ at different sizes $Y$ of the feature space. (Right) The ``ground truth'' fixed point of \mrdam. The top 50\% of pixels are clamped throughout the dynamics.} 
    \vspace{-0.4cm}
    \label{fig:RTRV1}
\end{figure}

\paragraph{Quantifying retrieval error} Given the same initial queries $\bx^{(0)} \in \{0, \frac{1}{\sqrt{D}} \}^D$, we want to quantify the difference between the fixed points $\hat{\bx}^\star$ retrieved by descending \drdam's approximate energy and the fixed points $\bx^\star$ retrieved by descending the energy of \mrdam. We follow the experimental setup of \cref{sec:quant1b}, only this time we run full memory retrieval dynamics until convergence. 

Note that since energy uses an L2-similarity kernel, memory retrieval is not guaranteed to return binary values. Thus, we binarize $\bx^\star$ by assigning each entry to its nearest binary value before computing the normalized Hamming approximation error $\Delta_H$, i.e., 

\begin{equation}
    \label{eq:binarize-op-hamming-err}
    \binarize{x} := \begin{dcases}
    \frac{1}{\sqrt{D}}, & x \geq \frac{1}{2\sqrt{D}} \\
    0, & \text{otherwise}
    \end{dcases}\; , \mathrm{and} \qquad \Delta_H := \frac{1}{\sqrt{D}} \sum\limits_{i \in \iset{D}} \Big| \binarize{\bx_i^\star} - \binarize{\hat{\bx}_i^\star} \Big|.
\end{equation}

The choice of normalized Hamming approximation error $\Delta_H$ on our binary data is equivalent to the squared L2 error on the left side of our bound in \cref{eq:lrate-div-ub} (up to a linear scaling of $\frac{1}{\sqrt{D}}$).

\Cref{fig:RTRV1}A shows the results of this experiment. Many observations from \cref{sec:quant1b} translate to these experiments: we notice that retrieval is random at high $\beta$ and $D$, and that retrievals are of generally higher accuracy nearer the stored patterns. However, we notice that high $\beta$ values can retrieve better approximations than lower values of $\beta$ when the queries are at or near stored patterns. 
\noindent Additionally, for sufficiently high $\beta$ (e.g., see $D=1000$, $\beta=50$ near stored patterns), this accompanies an interesting ``thresholding'' behavior for $Y$ where retrieval error starts to improve rapidly once $Y$ reaches a minimal threshold. This behavior is corroborated in the high $D$ regime in \cref{fig:RTRV1}B.
\paragraph{Visualizing retrieval error} \Cref{fig:RTRV1}B shows what retrieval errors look like qualitatively. We stored $K=10$ random images from CIFAR10~\cite{krizhevsky2009learning} into the memory matrix of \mrdam, resulting in patterns of size $D=3 \times 32 \times 32=3072$, and compared retrievals using $\beta$s that produced meaningful image results with \mrdam. To keep $\beta$ values consistent with our previous experiments, each pixel was normalized to the continuous range between $0$ and $\frac{1}{\sqrt{D}}$ s.t. $\xi^\mu_i \in [0, \frac{1}{\sqrt{D}}]$, with $\mu \in \iset{K}$ and $i \in \iset{D}$. 

From \cref{sec:quant1b} and \cref{fig:RTRV1}A, we know that approximate retrievals are inaccurate at high $\beta$ and high $D$ if the query is far from the stored patterns. However, this is exactly the regime we test when retrieving images in \cref{fig:RTRV1}B. The visible pixels (top half of the image) are clamped while running the dynamics until convergence. Retrieved memories at different configurations for \drdam are plotted against their corresponding \mrdam retrievals in \cref{fig:RTRV1}B.

As $\beta$ increases, insufficiently large values of $Y$ fail to retrieve meaningful approximations to the dynamics of \mrdam. We observe that image completions generally become less noisy as $Y$ increases, but with diminishing improvement in perceptible quality after some threshold where \drdam goes from predicting noise to predicting meaningful image completions.

\section{Conclusion} 
\label{sec:limitations}
Our study is explicitly designed to characterize where \drdam is a good approximation to the energies and dynamics of \mrdam. In pushing the limits of the distributed representation, we discovered that \drdam is most accurate when: (1) query patterns are nearer to the stored patterns; (2) $\beta$ is lower; and (3) $Y$ is large. Error bounds for these situations are explicitly derived in \cref{thm:div} and empirically tested in \cref{sec:eval}.

We have explored the use of distributed representations via random feature maps in DenseAMs. We have demonstrated how this can be done efficiently, and we precisely characterized how it performs the neural dynamics relative to the memory representation DenseAMs. Our theoretical results highlight the factors playing a role in the approximation introduced by the distributed representations, and our experiments validate these theoretical insights. As future work, we intend to explore how such distributed representations can be leveraged in hierarchical associative memory networks~\citep{krotov2021hierarchical, hoover2022a}, which can have useful inductive biases (e.g., convolutions, attention), and allow extensions with multiple hidden layers.

\clearpage

\bibliographystyle{unsrtnat}
\bibliography{refs}

\clearpage

\appendix

\section{Limitations}\label{sec: limitations APPENDIX}
In this paper, we have explored the use of distributed representations via random feature maps in DenseAMs. However, we are only scratching the surface of opportunities that such distributed representations bring to DenseAMs. There are various aspects we do not cover: (i)~We do not cover the ability of these distributed representations to provide (probably lossy) compression. (ii)~We do not study the properties of \drdam relative to \mrdam when \drdam is allowed to have different step sizes and number of layers than \mrdam. A further limitation of our work is the limited number of datasets on which we have characterized the performance of \drdam.

\section{Approximation error when increasing the number of stored patterns in \drdam}

\Cref{sec:quant1b} validated \cref{eq:lrate-div-ub}, confirming that approximation error decreases as the number of random features $Y$ increases under constant number of stored patterns $K$. We can also consider a related but different question: under constant number of random features $Y$, how does approximation error behave when increasing the number of stored patterns $K$? Intuitively, \drdam's approximation should be good when a small number of patterns are stored in the network, and this approximation should worsen as we increase the number of stored patterns.

\Cref{fig:quant1c_againstK} validates this intuition empirically, with the caveat that random queries generally improve in accuracy because the probability of being near a stored patterns (a regime that generally leads to higher accuracy of retrievals, see \cref{sec:eval}) increases as we store more patterns into the network. For this experiment, $Y=2e5$ was held constant across all experiments and each plotted approximation error is averaged over a number of queries equal to the number of stored patterns $K$. The experimental design otherwise exactly replicates that of \cref{fig:quant1b}.

\begin{figure}[h]
    \centering
    \includegraphics[width=1.\linewidth]{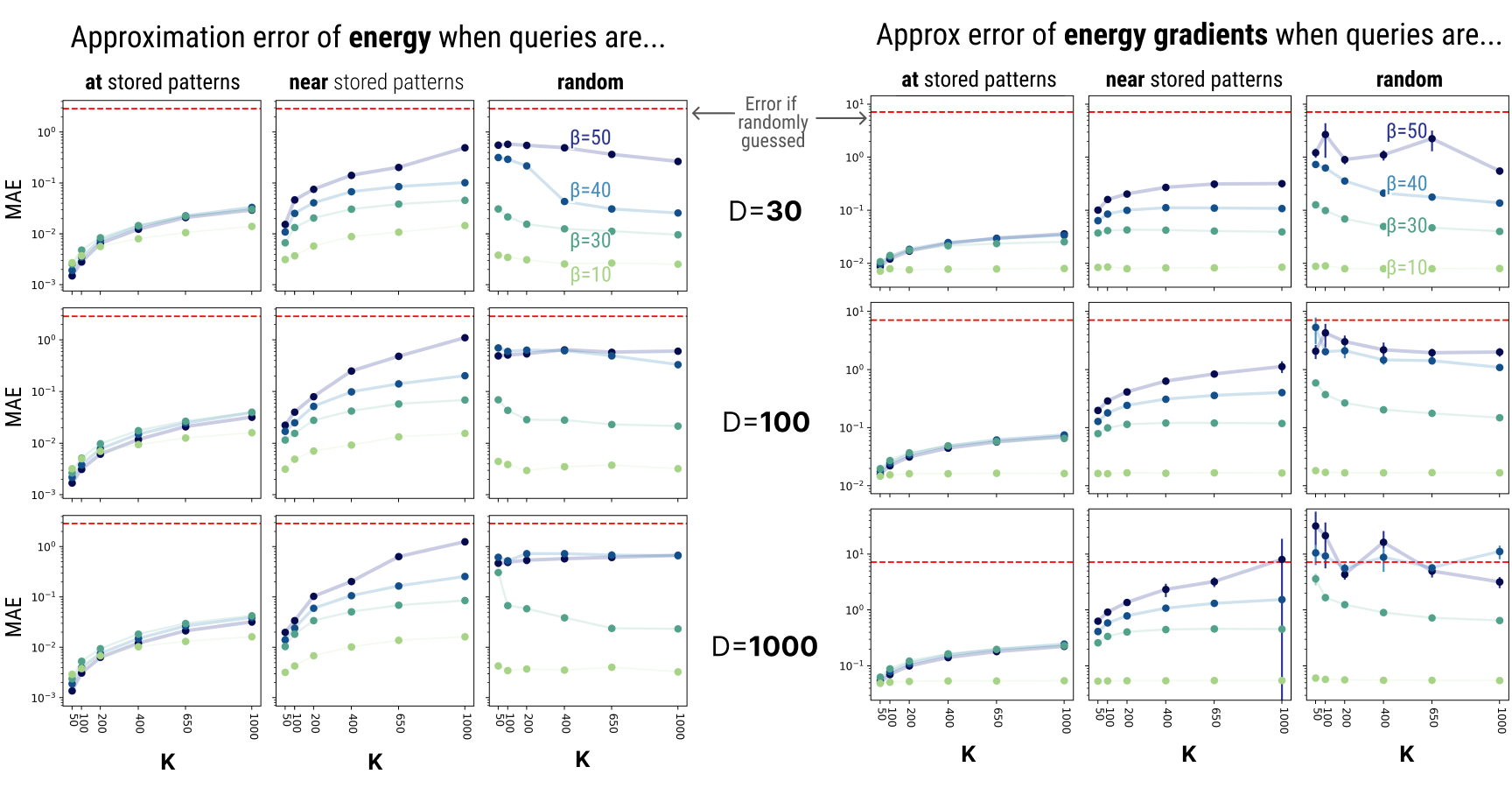}
    \caption{Mean Approximation Error (MAE, \cref{eq:MAE}) increases as the number of stored patterns $K$ increases (except at random starting positions, where more stored patterns increases the probability that a random query is closer to a memory, a regime that leads to higher accuracy of the retrievals, see \cref{fig:quant1b}), keeping $Y=2e5$ constant across all experiments.}
    \label{fig:quant1c_againstK}
\end{figure}

\section{Ablation study: Comparing choices for basis function} \label{sec:ablation-basis-func}

Different basis functions can be used to approximate the RBF kernel used in the energy of the memory representation of the DAM in \cref{eq:mhn-en}. 
We considered the following kernels (``Cos'', ``SinCos'',  ``Exp'', ``ExpExp''), rewritten here as

\begin{alignat*}{2}
&\bvphi_\mathrm{Cos}(\bx) &&= \sqrt{\frac{2}{Y}} 
\left[ \begin{array}{l}
\cos(\ip{\rf^1}{\bx} + b_1)\\
\cos(\ip{\rf^2}{\bx} + b_2)\\
\ldots, \\
\cos(\ip{\rf^Y}{\bx} + b_Y)\\
\end{array} \right],\\
&\bvphi_\mathrm{SinCos}(\bx) &&= \frac{1}{\sqrt{Y}}
\left[ \begin{array}{l}
\cos(\ip{\rf^1}{\bx})\\
\sin(\ip{\rf^1}{\bx})\\
\cos(\ip{\rf^2}{\bx})\\
\sin(\ip{\rf^2}{\bx})\\
\ldots, \\
\cos(\ip{\rf^Y}{\bx})\\
\sin(\ip{\rf^Y}{\bx})\\
\end{array} \right],\\
&\bvphi_\mathrm{Exp}(\bx) &&= \frac{\exp(- \|\bx\|_2^2)}{\sqrt{Y}}
\left[ \begin{array}{l}
\exp(\ip{\rf^1}{\bx} + b_1)\\
\exp(\ip{\rf^2}{\bx} + b_2)\\
\ldots, \\
\exp(\ip{\rf^Y}{\bx} + b_Y)\\
\end{array} \right],\\
&\bvphi_\mathrm{ExpExp}(\bx) &&= \frac{\exp(- \|\bx\|_2^2)}{\sqrt{2Y}}
\left[ \begin{array}{l}
\exp(+\ip{\rf^1}{\bx})\\
\exp(-\ip{\rf^1}{\bx})\\
\exp(+\ip{\rf^2}{\bx})\\
\exp(-\ip{\rf^2}{\bx})\\
\ldots, \\
\exp(+\ip{\rf^Y}{\bx})\\
\exp(-\ip{\rf^Y}{\bx})\\
\end{array} \right],
\end{alignat*}

where $\rf^\alpha \sim \mathcal N(0,I_D)$, $\alpha \in \iset{Y}$ are the random projection vectors and $b^\alpha \sim \mathcal{U}(0, 2\pi)$ are random ``biases'' or shifts in the basis function.

\Cref{fig:compare-all-kernels} shows how well the above basis functions approximated the true energy and energy gradient at different values for $\beta$ and size of feature dimension $Y$. Specifically, given the Letter dataset~\cite{frey1991letter} which consists of 16-dimensional continuous vectors whose values were normalized to be between $[0, \frac{1}{\sqrt{D}}]$, we randomly selected $900$ unique data points, storing $500$ patterns into the memory and choosing the remaining $400$ to serve as new patterns. We then compared how well the energy and energy gradients of the chosen basis function approximates the predictions of the original DAM.

We observe that the trigonometric basis functions (i.e., either Cos or SinCos) provide the most accurate approximations for the energy and gradients of the standard \mrdam, especially in the regime of high $\beta$ which is required for the high memory storage capacity of DenseAMs. Surprisingly, the Positive Random Features (PRFs) of \cite{choromanski2020rethinking} do not perform well in the dense (high $\beta$) regime; in general, trigonometric features always provide better approximations than the PRFs.

We conclude that the SinCos basis function is the best approximation for use in the experiments reported in the main paper, as this choice consistently produces the best approximations for the energy gradients across all values of $\beta$. 

\begin{figure}[h]
    \centering
    \includegraphics[width=\textwidth]{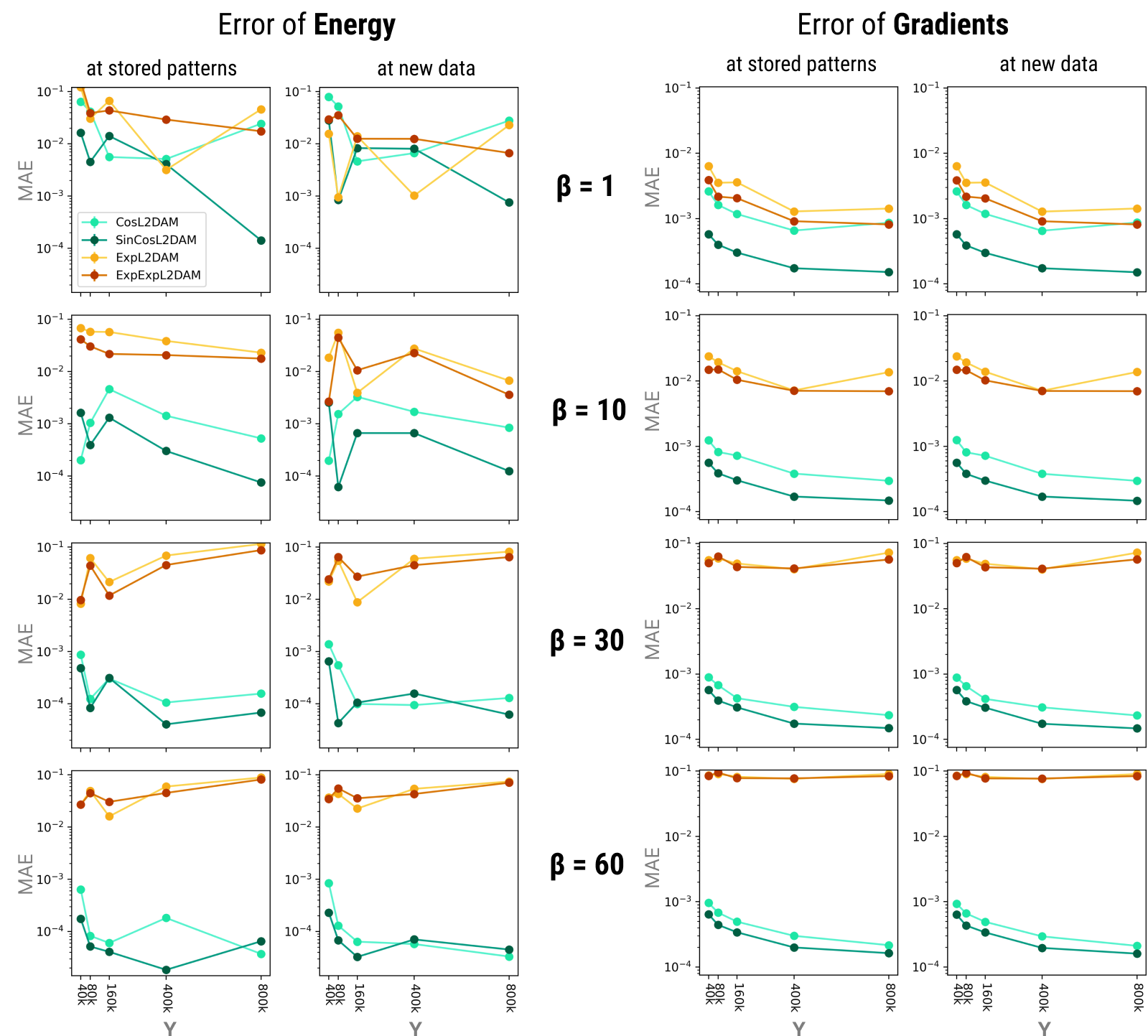}
    \caption{Trigonometric basis functions significantly outperform Positive Random Features, especially in the regime of large $\beta$. We end up choosing the SinCos function to analyze in the main paper, as this choice of basis function always produced the best approximations to the energy gradient. Experiments performed on the 16-dimensional Letters dataset~\cite{frey1991letter}.}
    \label{fig:compare-all-kernels}
\end{figure}

\section{TinyImagenet Experimental Details} \label{sec:details-fig1}

In performing the qualitative reconstructions shown in \cref{fig:fig1}, we used a standard \mrdam energy (\cref{eq:mhn-en}) configured with inverse temperature $\beta=60$. We approximated this energy in a \drdam using the trigonometric ``SinCos'' basis function shown in \cref{eq:rf-trig} configured with feature dimension $Y=1.8e5$. The four images shown were selected from the Tiny Imagenet~\cite{Le2015TinyIV} dataset, rasterized into a vector, and stored in the memory matrix a \mrdam, resulting in a memory of shape $(4, 12288)$. Energy descent for both \mrdam and \drdam used standard gradient descent at a step size of $0.1$ until the dynamics of all images converged (for \cref{fig:fig1} after $300$ steps, see energy traces). Visible pixels are ``clamped'' throughout the duration of the dynamics by zeroing out the energy gradients on the visible top one-third of the image.

In \mrdam, the memory matrix necessarily grows linearly when storing new patterns $\bxi^\mu$. However, the distributed memory tensor $\bT$ of \drdam does not grow when new patterns are stored. This means it is possible to compress the memories into a smaller tensor $\bT$ where $Y<\text{NumPixels}$, provided that we operate in a regime that allows larger approximation errors in the retrieval and smaller initial occlusions. \Cref{fig:PINF2} shows a variation of the setting of \cref{fig:fig1} where stored patterns are actually \textit{compressed} into \drdam's memory tensor, successfully storing $20 * 12288$ pixels from a distributed tensor of size $Y=2e5$ and retrieving the memories with $40$\% initial occlusion of the queries, a ${\sim}20$\% reduction in the number of parameters compared to \mrdam. All other hyperparameters are the same as was used to generate \cref{fig:fig1}, and convergence on all images occurs after $1000$ steps.

\section{Details on Computational Environment for the Experiments}
\label{sec:comp-environment}

All experiments are performed on a single L40s GPU equipped with 46GB VRAM. Experiments were written and performed using the JAX~\cite{jax2018github} library for tensor manipulations. Unless otherwise noted, gradient computation was performed using JAX's powerful autograd mechanism. Experimental code  with instructions to replicate the results in this paper are made available at \href{https://github.com/bhoov/distributed_DAM}{this GitHub repository} (\url{https://github.com/bhoov/distributed_DAM}), complete with instructions to setup the coding environment and run all experiments.

%

%

%
%
%
%
%
%
%

%

\clearpage

\section{Detailed Proofs and Discussions} \label{asec:thm-proofs}

\subsection{Details for \cref{thm:kdam-egd-cc}} \label{asec:kdam-egd-proof}

\subsubsection{Proof of \cref{thm:kdam-egd-cc}}

\begin{proof}[Proof of \cref{thm:kdam-egd-cc}]
The proof above involves noting that, first we need to encode all the memories with ProcMems, which takes $O(DYK)$ time and $O(Y+D)$ peak memory using \cref{prop:prep-comp}.

Then we compute $L$ gradients with GradComp for $L$ iterations of energy gradient descent, taking  $O(LD(Y+D))$ time and $O(D+Y)$ peak memory using \cref{prop:grad-comp}.

Putting the runtimes together, and using the maximum of the peak memories gives us the statement of the theorem.
\end{proof}

\subsubsection{Comparing computational complexities of \mrdam and \drdam}

Note that, comparing the computational complexities of \mrdam in \cref{prop:dam-egd-cc} to that of \drdam in \cref{thm:kdam-egd-cc} does not directly provide any computational improvements as it would depend on the choices of $D, K, L, Y$.
The main point of these results is to highlight, that once the memories are processed via ProcMems, the energy descent with \drdam requires computation and memory that only depends on $D$ and $Y$. And together with \cref{thm:div} and \cref{cor:lrate-div-en}, we characterize situations where the energy descent divergence between \mrdam and \drdam can be bounded with a choice of $Y$ that only depends on $D$ (and other parameters in the energy function) but not $K$.

While we do not claim or highlight computational gains over \mrdam, note that the peak memory complexity of \mrdam is $O(KD)$ compared to $O(Y+D)$ for \drdam. Given that in the interesting regime of $Y \sim O(D / \epsilon^2)$ which upperbounds the energy descent divergence between \drdam and \mrdam in \cref{cor:lrate-div-en} to at most some $\epsilon > 0$, \drdam is more memory efficient than \mrdam if the number of memories $K > C / \epsilon^2$ for some sufficiently large positive constant $C$. 
Ignoring the time required to encode the memories into the distributed representation in \drdam using ProcMems, the runtime complexities are $O(LKD)$ for \mrdam compared to $O(LD(Y+D))$ for \drdam. Again, considering the interesting regime of $Y \sim O(D / \epsilon^2)$, \drdam will be computationally more efficient than \mrdam if the number of memories $K > \widetilde{C} D / \epsilon^2$ for some sufficiently large positive constant $\widetilde{C}$.

\subsection{Details for \cref{thm:div}} \label{asec:div-bnd-proof}

\subsubsection{Proof of \cref{thm:div}}

Here we will make use of the following result from \citet{li2023transformers}:

\begin{lemma}[adapted from \citet{li2023transformers} Lemma B.1]
\label{alem:smax-stab}
For $\bx, \bz \in \R^K$ with $\max_{i,j \in \iset{ K }} (\bx_i - \bx_j) \leq
\delta$ and $\max_{i,j \in \iset{ K }} (\bz_i - \bz_j) \leq \delta$, we have
the following:
\begin{equation}\label{aeq:smax-stab}
\| \softmax(\bx) \|_\infty \leq \frac{e^{\delta}}{K},
\quad
\| \softmax(\bx) - \softmax(\bz) \|_1
\leq  \frac{e^{\delta}}{K} \|\bx - \bz\|_1.
\end{equation}
\end{lemma}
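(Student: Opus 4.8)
The two parts are essentially independent elementary calculations, with the first feeding into the second; the plan is to dispatch the $\ell_\infty$ bound first and then use it inside a mean-value argument for the $\ell_1$-Lipschitz bound. For part~(i), I would rewrite $\softmax(\bx)_i = e^{x_i}/\sum_{j=1}^K e^{x_j} = \big(\sum_{j=1}^K e^{x_j - x_i}\big)^{-1}$. The hypothesis $\max_{k,\ell}(x_k - x_\ell) \le \delta$ gives $x_j - x_i \ge -\delta$ for every $j$, hence $e^{x_j - x_i} \ge e^{-\delta}$ and $\sum_j e^{x_j - x_i} \ge K e^{-\delta}$; taking reciprocals yields $\softmax(\bx)_i \le e^{\delta}/K$ for all $i$.

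For part~(ii), I would interpolate linearly, setting $\bx_t = (1-t)\bx + t\bz$ for $t \in [0,1]$, and first verify that the gap bound is preserved: $\max_{k,\ell}\big((\bx_t)_k - (\bx_t)_\ell\big) = \max_{k,\ell}\big((1-t)(x_k-x_\ell) + t(z_k - z_\ell)\big) \le (1-t)\delta + t\delta = \delta$, so part~(i) applies uniformly along the segment, i.e.\ $\|\softmax(\bx_t)\|_\infty \le e^\delta/K$ for every $t$. Then by the fundamental theorem of calculus, $\softmax(\bz) - \softmax(\bx) = \big(\int_0^1 \mathbf J(t)\,dt\big)(\bz - \bx)$, where $\mathbf J(t)$ is the softmax Jacobian at $\bx_t$, namely $J_{ij}(t) = p_i(t)\big(\mathbf{1}[i=j] - p_j(t)\big)$ with $p(t) := \softmax(\bx_t)$. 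It then remains to bound the $\ell_1 \to \ell_1$ operator norm of $\int_0^1 \mathbf J(t)\,dt$, which is at most $\sup_{t}\|\mathbf J(t)\|_{\ell_1\to\ell_1}$, i.e.\ the supremum over $t$ of the maximum absolute column sum. A one-line computation gives, for each column $j$, $\sum_i |J_{ij}(t)| = p_j(t)(1 - p_j(t)) + \sum_{i \ne j} p_i(t) p_j(t) = 2 p_j(t)\big(1 - p_j(t)\big) \le 2\|p(t)\|_\infty \le 2 e^\delta / K$ by part~(i), so $\|\softmax(\bz) - \softmax(\bx)\|_1 \le (2 e^\delta / K)\,\|\bz - \bx\|_1$, which is the stated bound up to an absolute constant; if the exact constant matters, a sign-splitting refinement of the integrand $\sum_i p_i(t)\big|(\bz-\bx)_i - \ip{p(t)}{\bz-\bx}\big|$ tightens it.

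I do not expect a genuine obstacle here: the statement is elementary and closely follows \citet{li2023transformers}. The only points needing care are (a) checking that the $\delta$-gap hypothesis survives the interpolation, which is precisely what lets part~(i) control $\|p(t)\|_\infty$ uniformly in $t$, and (b) correctly identifying the $\ell_1 \to \ell_1$ induced norm of the symmetric softmax Jacobian with its maximum absolute column sum and evaluating that sum. Both are routine.
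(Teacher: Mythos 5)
The paper never actually proves this lemma---it is imported (``adapted'') from \citet{li2023transformers} and used as a black box---so there is no in-paper argument to compare yours against; attempting a self-contained proof is the right instinct. Your part~(i) is the standard argument and is correct. Your part~(ii)---interpolating along $\bx_t=(1-t)\bx+t\bz$, checking that the $\delta$-gap hypothesis is preserved along the segment, and bounding the $\ell_1\to\ell_1$ operator norm of the softmax Jacobian by its maximum absolute column sum $\max_j 2p_j(t)(1-p_j(t))\leq 2\|p(t)\|_\infty\leq 2e^{\delta}/K$---is also correct, and delivers the Lipschitz constant $2e^{\delta}/K$.

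Where you go wrong is in treating the extra factor of $2$ as removable slack that a ``sign-splitting refinement'' would eliminate. It is not: the second inequality is false as stated for $K\geq 3$, so no refinement can recover the constant $e^{\delta}/K$. At the uniform point the $\ell_1\to\ell_1$ norm of the Jacobian is exactly $\max_j 2p_j(1-p_j)=2(K-1)/K^2$, which exceeds $1/K$ whenever $K>2$. Concretely, take $K=3$, $\bx=(0,0,0)$, $\bz=(\epsilon,0,0)$, $\delta=\epsilon$: then $\|\softmax(\bx)-\softmax(\bz)\|_1=\tfrac{4(e^{\epsilon}-1)}{3(e^{\epsilon}+2)}\approx\tfrac{4\epsilon}{9}$, while the claimed bound is $\tfrac{e^{\epsilon}}{3}\epsilon\approx\tfrac{\epsilon}{3}$. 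So your $2e^{\delta}/K$ is essentially the correct and tight constant, and the lemma should be stated with that factor. The downstream damage is cosmetic: the lemma is invoked once, in the proof of \cref{alem:grad-stab}, where the corrected constant turns $(1+2K\beta e^{\beta/2})$ into $(1+4K\beta e^{\beta/2})$ and correspondingly doubles a constant in \cref{thm:div} and \cref{cor:lrate-div-en}; nothing qualitative changes. I would flag the incorrect constant rather than try to prove the statement as written.
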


We now develop the following results:

\begin{lemma}\label{alem:grad-stab}
Under the conditions and notation of \cref{thm:div}, for $\bx, \bz \in \mathcal X$, we have 
\begin{equation}
\| \grad_\bx E(\bx) - \grad_\bx E(\bz) \| \leq
(1 + 2 K \beta e^{\beta/2}) \| \bx - \bz \|.
\end{equation}
\end{lemma}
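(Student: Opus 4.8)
The plan is to first write the energy gradient in an explicit ``softmax'' form and then run a standard stability argument, in the spirit of the Lipschitzness proofs for softmax attention. Differentiating \cref{eq:div-en} gives $\grad_\bx E(\bx) = \bx - \sum_{\mu=1}^K s_\mu(\bx)\, \bxi^\mu$, where $s_\mu(\bx) = \softmax(\bv(\bx))_\mu$ are the softmax weights induced by the logits $\bv_\mu(\bx) = -\tfrac{\beta}{2}\norm{\bxi^\mu - \bx}_2^2$ (the $1/\beta$ coming from $Q$ and the $\beta$ coming from $F$ in \cref{eq:gen-grad} cancel). Subtracting the same expression evaluated at $\bz$ and using $\sum_\mu s_\mu(\bz) = 1$, a triangle inequality yields $\norm{\grad_\bx E(\bx) - \grad_\bx E(\bz)} \le \norm{\bx - \bz} + \sum_{\mu=1}^K \big| s_\mu(\bx) - s_\mu(\bz) \big| \, \norm{\bxi^\mu}_2$, so it remains to bound the second term by $2K\beta e^{\beta/2}\norm{\bx - \bz}$.

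For that I would use Assumption (A1) twice and \cref{alem:smax-stab} once. First, (A1) gives $\norm{\bxi^\mu}_2 \le 1$ for every $\mu$ (each coordinate is at most $1/\sqrt{D}$), and similarly $\norm{\bxi^j - \bx}_2^2 \le 1$, so the pairwise logit gaps satisfy $\bv_i(\bx) - \bv_j(\bx) = \tfrac{\beta}{2}\big(\norm{\bxi^j-\bx}_2^2 - \norm{\bxi^i-\bx}_2^2\big) \le \tfrac{\beta}{2}$, and likewise at $\bz$; hence \cref{alem:smax-stab} applies with $\delta = \beta/2$ and gives $\big| s_\mu(\bx) - s_\mu(\bz)\big| \le \norm{s(\bx) - s(\bz)}_1 \le \tfrac{e^{\beta/2}}{K}\norm{\bv(\bx) - \bv(\bz)}_1$ for each $\mu$. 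Summing the $K$ identical bounds (each weighted by $\norm{\bxi^\mu}_2 \le 1$) cancels the $1/K$ and leaves $e^{\beta/2}\norm{\bv(\bx) - \bv(\bz)}_1$. Finally, writing $\norm{\bxi^\mu - \bx}_2^2 - \norm{\bxi^\mu - \bz}_2^2 = \ip{\bz - \bx}{2\bxi^\mu - \bx - \bz}$ and bounding $\norm{2\bxi^\mu - \bx - \bz}_2 \le 2\norm{\bxi^\mu}_2 + \norm{\bx}_2 + \norm{\bz}_2 \le 4$ (again by (A1)) gives $\norm{\bv(\bx) - \bv(\bz)}_1 \le 2K\beta\norm{\bx - \bz}$; assembling the pieces produces the claimed constant $1 + 2K\beta e^{\beta/2}$.

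I do not anticipate a genuine obstacle here — the whole argument is a chain of elementary norm estimates — but the step that needs care is verifying the hypotheses of \cref{alem:smax-stab}: one must establish the uniform gap bound $\delta = \beta/2$ on the logits at both $\bx$ and $\bz$, which is exactly where the bounded domain $\mathcal X$ in Assumption (A1) is essential. A secondary bookkeeping point is tracking the factor of $K$ consistently — it is deliberately retained rather than cancelled because $\big| s_\mu(\bx) - s_\mu(\bz)\big|$ is controlled by the full $\ell_1$ distance $\norm{s(\bx) - s(\bz)}_1$ for each of the $K$ coordinates — and confirming that the $1/\beta$ and $\beta$ scalars from $Q$ and $F$ cancel correctly when forming $\grad_\bx E$ from \cref{eq:gen-grad}.
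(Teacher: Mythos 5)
Your proposal is correct and follows essentially the same route as the paper's proof: the same softmax form of the gradient, the same application of \cref{alem:smax-stab} with $\delta = \beta/2$ justified by the bounded domain in (A1), and the same algebraic bound $\|2\bxi^\mu - \bx - \bz\|_2 \le 4$ on the logit differences, yielding the identical constant $1 + 2K\beta e^{\beta/2}$. The only cosmetic difference is how the factor of $K$ is retained (you sum $K$ coordinate-wise $\ell_1$ bounds weighted by $\|\bxi^\mu\|_2 \le 1$, whereas the paper uses $\|\Xi\|_2 \le K$), but both are equally loose and land in the same place.
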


\begin{proof}
Given the energy function in \cref{eq:div-en}, we can write the energy gradient $\grad_\bx E(\bx)$ as:
\begin{equation} \label{aeq:l2-egrad}
\grad_\bx E(\bx) = \softmax(-\beta/2 \|\bx - \Xi\|_2^2) (\bx - \Xi) = \bx - \softmax(-\beta/2 \|\bx - \Xi\|_2^2) \Xi,
\end{equation}
where $\Xi = [\xi^1, \ldots, \xi^K]$, $\|\bx - \Xi \|_2^2$ denotes $[\|\bx - \xi^1\|_2^2, \ldots \|\bx - \xi^K \|_2^2]$ and $(\bx - \Xi)$ denotes $[(\bx - \xi^1), \ldots, (\bx - \xi^K)]$. Then we have

\begin{align}
& \| \grad_\bx E(\bx) - \grad_\bx E(\bz) \|_2 \\
&\quad  = \| \bx - \softmax(-\beta/2 \|\bx - \Xi\|_2^2) \Xi - \bz + \softmax(-\beta/2 \|\bz - \Xi\|_2^2) \Xi \|_2 \\
&\quad  \leq \|\bx - \bz \| +  \| (\softmax(-\beta/2 \|\bx - \Xi\|_2^2) - \softmax(-\beta/2 \|\bz - \Xi\|_2^2)) \Xi \|_2 \\
&\quad  \leq \|\bx - \bz\| + \| (\softmax(-\beta/2 \|\bx - \Xi\|_2^2) - \softmax(-\beta/2 \|\bz - \Xi\|_2^2)) \|_1 \| \Xi \|_2  \label{aeq:grad-stab:smax-sep} \\
&\quad \leq \|\bx - \bz\| + \frac{ e^{\beta/2} }{K} \| \Xi \|_2 \left \| \beta/2 ( \|\bz - \Xi\|_2^2 - \|\bx - \Xi \|_2^2 ) \right\|_1, \label{aeq:grad-stab:smax-bnd}
\end{align}
where we applied \cref{alem:smax-stab} to the softmax term in the right hand side of \cref{aeq:grad-stab:smax-sep} with $\delta = \beta/2$ since all pairwise distances in $\mathcal X$ are in $[0,1]$.

Now we have

\begin{align}
\left\| \beta/2 ( \|\bz - \Xi\|_2^2 - \|\bx - \Xi \|_2^2 ) \right\|_1 
& = \frac{\beta}{2} \sum_{\mu=1}^K \left|\|\bz - \bxi^\mu\|_2^2 - \|\bx - \bxi^\mu \|_2^2 \right| \\
& = \frac{\beta}{2} \sum_{\mu=1}^K \left| \ip{\bz+\bx}{\bz-\bx} + 2 \ip{\bxi^\mu}{\bx-\bz} \right| \\
& \leq \frac{\beta}{2} \sum_{\mu=1}^K \|\bz-\bx\| ( \|\bz+\bx\| + 2 \|\bxi^\mu\|) 
\leq \frac{\beta}{2} \sum_{\mu=1}^K 4 \|\bz-\bx\|, \label{aeq:grad-stab:dist-bnd}
\end{align}

since $\|\bxi^\mu\| \leq 1$ and $\|\bx + \bz\| \leq \|\bx\| + \|\bz\| \leq 2$. Putting \cref{aeq:grad-stab:dist-bnd} in  \cref{aeq:grad-stab:smax-bnd}, and using the fact that $\| \Xi \|_2 \leq K$, we have

\begin{equation}
\| \grad_\bx E(\bx) - \grad_\bx E(\bz) \|_2
\leq \|\bx - \bz\| + \frac{ e^{\beta/2} }{K} K 2\beta \sum_{\mu=1}^K \|\bz-\bx\| = (1 + 2K\beta e^{\beta/2}) \|\bx - \bz\|,
\end{equation}
giving is \cref{aeq:smax-stab} in the statement of the lemma.
\end{proof}

Given the structure of the energy gradient in \cref{aeq:l2-egrad} of the energy function in \cref{eq:div-en}, we consider a specific energy gradient for this specific energy function instead of the generic energy gradient in \cref{eq:kdam-grad}. We can rewrite the exact energy gradient as
\begin{equation}
\grad_\bx E(\bx) = \bx - \sum_{\mu=1}^K \frac{\exp(-\beta/2 \| \bx - \bxi^\mu \|_2^2) \bxi^\mu}{\sum_{\mu' = 1}^K \exp(-\beta/2 \| \bx - \bxi^{\mu'} \|_2^2) }.
\end{equation}

Using random feature maps, we can write the approximate gradient as
\begin{align}
\grad_\bx \hat E(\bx) 
& = \bx - \sum_{\mu=1}^K \frac{\ip{\bvphi(\sqrt{\beta} \bx)}{\bvphi(\sqrt{\beta} \bxi^\mu)} \bxi^\mu}{\sum_{\mu' = 1}^K \ip{\bvphi(\sqrt{\beta} \bx)}{\bvphi(\sqrt{\beta} \bxi^{\mu'})} } \\
& =  \frac{\sum_{\mu=1}^K \bvphi(\sqrt{\beta} \bx) \cdot \bvphi(\sqrt{\beta} \bxi^\mu) \cdot \bxi^\mu{}^\top}{\ip{\bvphi(\sqrt{\beta} \bx)}{\sum_{\mu' = 1}^K \bvphi(\sqrt{\beta} \bxi^{\mu'})} } \\
& =  \frac{ \bvphi(\sqrt{\beta} \bx) \cdot \sum_{\mu=1}^K \bvphi(\sqrt{\beta} \bxi^\mu) \cdot \bxi^\mu{}^\top}{\ip{\bvphi(\sqrt{\beta} \bx)}{\bT}}, \quad \text{ where } \bT = \sum_{\mu' = 1}^K \bvphi(\sqrt{\beta} \bxi^{\mu'}) \\
& = \frac{\bvphi(\sqrt{\beta} \bx) \cdot \bR}{\ip{\bvphi(\sqrt{\beta} \bx)}{\bT}}, \quad \text{ where } \bR = \sum_{\mu=1}^K \bvphi(\sqrt{\beta} \bxi^\mu) \cdot \bxi^\mu{}^\top, \label{aeq:l2-kdam-egrad}
\end{align}
where we again just need to store $\bT$ and $\bR$ as defined above and do not need to maintain the original memory matrix $\Xi$.

\begin{lemma}\label{alem:kdam-grad-approx}
Under the conditions and notation of \cref{thm:div}, and assuming that $\ip{\bvphi(\bx)}{\bvphi(\bx')} \geq 0 \forall \bx, \bx' \in \mathcal X$, for the approximate gradient $\grad_\bx \hat E(\bx)$ in \cref{aeq:l2-kdam-egrad}, we have 
\begin{equation}\label{aeq:kdam-grad-approx}
\| \grad_\bx E(\bx) - \grad_\bx \hat E(\bx) \| \leq
2 C_1 K e^{\beta E(\bx)} \sqrt{\frac{D}{Y}}.
\end{equation}
\end{lemma}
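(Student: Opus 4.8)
The plan is to bound the difference between the exact gradient $\grad_\bx E(\bx) = \bx - \sum_\mu \frac{\exp(-\beta/2\|\bx-\bxi^\mu\|^2)\bxi^\mu}{\sum_{\mu'}\exp(-\beta/2\|\bx-\bxi^{\mu'}\|^2)}$ and the random-feature gradient $\grad_\bx\hat E(\bx) = \bx - \sum_\mu \frac{\ip{\bvphi(\sqrt\beta\bx)}{\bvphi(\sqrt\beta\bxi^\mu)}\bxi^\mu}{\sum_{\mu'}\ip{\bvphi(\sqrt\beta\bx)}{\bvphi(\sqrt\beta\bxi^{\mu'})}}$. The $\bx$ terms cancel, so the task reduces to comparing two convex combinations of the memories $\{\bxi^\mu\}$, one with true softmax weights $w_\mu = \exp(-\beta/2\|\bx-\bxi^\mu\|^2)/Z$ and one with approximate weights $\hat w_\mu = \ip{\bvphi(\sqrt\beta\bx)}{\bvphi(\sqrt\beta\bxi^\mu)}/\hat Z$, where $Z = \sum_{\mu'}\exp(-\beta/2\|\bx-\bxi^{\mu'}\|^2)$ and $\hat Z = \ip{\bvphi(\sqrt\beta\bx)}{\bT}$. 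I would write the difference as $\sum_\mu (w_\mu - \hat w_\mu)\bxi^\mu$ and bound its norm by $\|\bw - \hat\bw\|_1 \max_\mu\|\bxi^\mu\| \leq \|\bw - \hat\bw\|_1$ using assumption (A1).

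The core estimate is then $\|\bw - \hat\bw\|_1$. Each numerator $k_\mu := \exp(-\beta/2\|\bx-\bxi^\mu\|^2)$ equals $\exp(\beta\ip{\bx}{\bxi^\mu} - \beta/2\|\bx\|^2 - \beta/2\|\bxi^\mu\|^2)$, and by the scaling remark in the excerpt, $\ip{\bvphi(\sqrt\beta\bx)}{\bvphi(\sqrt\beta\bxi^\mu)}$ approximates the EDP-type quantity that, after the same $\exp(-\beta/2\|\bx\|^2 - \beta/2\|\bxi^\mu\|^2)$ scaling, targets exactly $k_\mu$. Concretely, assumption (A2) gives $|\exp(\|\bu-\bv\|^2/2) - \ip{\bvphi(\bu)}{\bvphi(\bv)}| \leq C_1\sqrt{D/Y}$ for the relevant feature map; applied with $\bu = \sqrt\beta\bx$, $\bv = \sqrt\beta\bxi^\mu$ and then multiplied through by the common positive factor $\exp(-\beta/2\|\bx\|^2-\beta/2\|\bxi^\mu\|^2) \leq 1$ (since norms are bounded by $1$ under (A1), actually $\le 1$), I get $|k_\mu - \hat k_\mu| \leq C_1\sqrt{D/Y}$ where $\hat k_\mu := \ip{\bvphi(\sqrt\beta\bx)}{\bvphi(\sqrt\beta\bxi^\mu)}$ is the (nonnegative, by hypothesis) approximate numerator. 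Hence $|Z - \hat Z| = |\sum_\mu(k_\mu - \hat k_\mu)| \leq K C_1\sqrt{D/Y}$ as well.

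Next I would convert these numerator/denominator perturbation bounds into a bound on the normalized weights. Using the standard identity $w_\mu - \hat w_\mu = \frac{k_\mu}{Z} - \frac{\hat k_\mu}{\hat Z} = \frac{k_\mu - \hat k_\mu}{Z} + \hat k_\mu\left(\frac{1}{Z} - \frac{1}{\hat Z}\right) = \frac{k_\mu - \hat k_\mu}{Z} + \frac{\hat k_\mu(\hat Z - Z)}{Z\hat Z}$, summing absolute values over $\mu$ (and using $\sum_\mu \hat k_\mu = \hat Z$) yields $\|\bw - \hat\bw\|_1 \leq \frac{\sum_\mu|k_\mu - \hat k_\mu|}{Z} + \frac{|\hat Z - Z|}{Z} \leq \frac{2 K C_1\sqrt{D/Y}}{Z}$. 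The final step is to identify $1/Z$ with $e^{\beta E(\bx)}$: from the energy definition $E(\bx) = -\frac1\beta\log Z$ we get $Z = e^{-\beta E(\bx)}$, so $1/Z = e^{\beta E(\bx)}$, delivering exactly $\|\grad_\bx E(\bx) - \grad_\bx\hat E(\bx)\| \leq 2 C_1 K e^{\beta E(\bx)}\sqrt{D/Y}$.

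The main obstacle I anticipate is handling the $\exp(-\beta/2\|\bx\|^2)$ normalization factors carefully so that (A2) — stated for the RBF-style feature map without the $\beta$ scaling and without the $\exp(-\|\cdot\|^2)$ prefactor — translates cleanly into a bound on $|k_\mu - \hat k_\mu|$ with the same constant $C_1$; this requires checking that multiplying by a factor in $[0,1]$ only shrinks the error and that the $\sqrt\beta$ rescaling of inputs is absorbed into the feature map as the excerpt's scaling remark claims. A secondary subtlety is the nonnegativity hypothesis $\ip{\bvphi(\bx)}{\bvphi(\bx')} \geq 0$ (needed so that $\hat Z > 0$ and the convex-combination interpretation of $\hat w_\mu$ holds); this is assumed in the lemma statement, and for the SinCos features it holds in the relevant regime, but I would flag that the clipping mentioned in the footnote is what makes this rigorous in general. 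Everything else is routine triangle-inequality and softmax-perturbation bookkeeping.
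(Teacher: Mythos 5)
Your proposal is correct and follows essentially the same route as the paper: both bound the kernel-value errors in numerator and denominator by $C_1\sqrt{D/Y}$ per memory, apply the standard ratio-perturbation decomposition (your per-weight identity summed over $\mu$ is algebraically the paper's $\frac{a}{b}-\frac{\hat a}{\hat b}=\frac{a-\hat a}{b}+\frac{\hat a}{\hat b}\cdot\frac{\hat b-b}{b}$), use nonnegativity of the approximate kernel values so the second term collapses (your $\sum_\mu \hat k_\mu=\hat Z$ is the paper's convex-hull bound $\|\hat a/\hat b\|\leq 1$), and identify $1/Z=K\cdot(1/b)\cdot(1/K)=e^{\beta E(\bx)}$. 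Your closing remarks on the $\sqrt{\beta}$ rescaling in (A2) and on why nonnegativity is needed are if anything more careful than the paper, which asserts both points implicitly.
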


\begin{proof}
We can expand out the left-hand side of \cref{aeq:kdam-grad-approx} as follows:
\begin{align}
\| \grad_\bx E(\bx) - \grad_\bx \hat E(\bx) \| 
& = \left \|
\frac{
\sum_{\mu=1}^K \exp(-\nicefrac{\beta}{2} \| \bx - \bxi^\mu \|^2 ) \bxi^\mu
}{
\sum_{\mu = 1}^K \exp(-\nicefrac{\beta}{2} \| \bx - \bxi^\mu \|^2 )
}
- \frac{
\sum_{\mu=1}^K \ip{\bvphi(\sqrt{\beta} \bx)}{\bvphi(\sqrt{\beta} \bxi^\mu)} \bxi^\mu
}{
\sum_{\mu=1}^K \ip{\bvphi(\sqrt{\beta} q)}{\bvphi(\sqrt{\beta} \bxi^\mu)}
}
 \right\|
\end{align}
by reversing the simplifying steps made above to arrive at \cref{aeq:l2-kdam-egrad}.

Let use denote $\epsilon = C_1\sqrt{D/Y}$ the approximation in the kernel value induced by the random feature map $\bvphi$. Then considering the terms in the denominator above, we have
\begin{align}
& (1/K) \left |
\sum_{\mu} \exp(- \nicefrac{\beta}{2} \| \bx - \bxi^\mu \|^2) 
- \ip{\bvphi(\sqrt{\beta} \bx)}{\sum_\mu \bvphi(\sqrt\beta \bxi^\mu)}
\right | 
\\
& \quad = (1/K)\left |
\sum_{\mu} \left( \exp(- \nicefrac{\beta}{2} \| \bx - \bxi^\mu \|^2) 
- \ip{\bvphi(\sqrt{\beta} \bx)}{\bvphi(\sqrt\beta \bxi^\mu)}
\right)
\right | 
\\
& \quad \leq (1/K)\sum_\mu 
\left | \left( \exp(- \nicefrac{\beta}{2} \| \bx - \bxi^\mu \|^2) 
- \ip{\bvphi(\sqrt{\beta} \bx)}{\bvphi(\sqrt\beta \bxi^\mu)}
\right)
\right | 
\leq (1/K) \sum_\mu \epsilon = \epsilon.
\end{align}

Considering the terms in the numerators, we have 

\begin{align}
& (1/K) \left \|
\sum_{\mu} \exp(- \nicefrac{\beta}{2} \| \bx - \bxi^\mu \|^2) \bxi^\mu
- \ip{\bvphi(\sqrt{\beta} \bx)}{\sum_\mu \bvphi(\sqrt\beta \bxi^\mu)}\bxi^\mu
\right \| 
\\
& \quad = (1/K)\left \|
\sum_{\mu} \left( \exp(- \nicefrac{\beta}{2} \| \bx - \bxi^\mu \|^2) 
- \ip{\bvphi(\sqrt{\beta} \bx)}{\bvphi(\sqrt\beta \bxi^\mu)}
\right) \bxi^\mu
\right \| 
\\
& \quad \leq (1/K)\sum_\mu 
\left \| \left( \exp(- \nicefrac{\beta}{2} \| \bx - \bxi^\mu \|^2) 
- \ip{\bvphi(\sqrt{\beta} \bx)}{\bvphi(\sqrt\beta \bxi^\mu)}
\right) \bxi^\mu
\right \| \\
& \quad \leq (1/K)\sum_\mu 
\left | \left( \exp(- \nicefrac{\beta}{2} \| \bx - \bxi^\mu \|^2) 
- \ip{\bvphi(\sqrt{\beta} \bx)}{\bvphi(\sqrt\beta \bxi^\mu)}
\right) \right| 
\left\| \bxi^\mu \right \| 
\\
& \leq (1/K) \sum_\mu \epsilon \|\bxi^\mu\| = \epsilon \qquad \because \| \bxi^\mu \| \leq 1.
\end{align}

Let us define the following terms for convenience:
\begin{itemize}
\item $a = \nicefrac{1}{K} \sum_{\mu} \exp(- \nicefrac{\beta}{2} \| \bx - \bxi^\mu \|^2) \bxi^\mu$
\item $b = \nicefrac{1}{K} \sum_{\mu} \exp(- \nicefrac{\beta}{2} \| \bx - \bxi^\mu \|^2)$
\item $\hat a = \nicefrac{1}{K} \sum_\mu \ip{\bvphi(\sqrt{\beta} \bx)}{\bvphi(\sqrt\beta \bxi^\mu)} \bxi^\mu$
\item $\hat b = \nicefrac{1}{K} \sum_\mu \ip{\bvphi(\sqrt{\beta} \bx)}{\bvphi(\sqrt\beta \bxi^\mu)}$
\end{itemize}

Then, based on our previous bounds, we know that
\begin{equation}
\|a - \hat a\| \leq \epsilon, \quad
|b - \hat b | \leq \epsilon, \quad
\| \grad_\bx E(\bx) - \grad_\bx \hat E(\bx) \| = \left\| \frac{a}{b} - \frac{\hat a}{\hat b} \right\|
\end{equation}

\begin{align}
\| \grad_\bx E(\bx) - \grad_\bx \hat E(\bx) \|
& = \left\| \frac{a}{b} - \frac{\hat a}{\hat b} \right\| 
= \left\| \frac{a - \hat a}{b} + \frac{\hat a}{\hat b} \frac{\hat b}{b} - \frac{\hat a}{\hat b} \right\|
\leq \left\| \frac{a - \hat a}{b} \right \|
+ \left\| \frac{\hat a}{\hat b} \right \|
\left | \frac{\hat b}{b} - 1 \right|
\\
& \leq \frac{1}{b} \|a - \hat a\| + 
\left\| \frac{\hat a}{\hat b} \right\| \frac{1}{b} |\hat b - b|
\leq \frac{1}{b} \left( \epsilon + \left\| \frac{\hat a}{\hat b} \right\| \epsilon \right)
\\ & \leq
\epsilon \frac{1}{b} \left(1 + \left\| \frac{\hat a}{\hat b} \right\| \right)
\end{align}.

Note that $(\hat a / \hat b)$ is in the convex hull of the memories since this is a weighted sum of the memories where the weights are positive and add up to 1. Thus within $ (\hat a / \hat b) \in [0, 1/\sqrt{d}]^d$, thus $\| \hat a / \hat b \| \leq 1$. Now $b = (1/K) \exp(-\beta E(\bx))$. Thus
\begin{align}
\| \grad_\bx E(\bx) - \grad_\bx \hat E(\bx) \| \leq 2 \epsilon K \exp(\beta E(\bx)) = 2 C_1 K \exp(\beta E(\bx)) \sqrt{\frac{D}{Y}},
\end{align}
giving us the right-hand side of \cref{aeq:kdam-grad-approx}.
\end{proof}

\begin{proof}[Proof of \cref{thm:div}]
Expanding out the divergence $D^{(L)}$ after $L$ energy descent steps, and using the fact that $\bx^{(0)} = \hat \bx^{(0)} = \bx$, we have
\begin{align} 
D^{(L)} & \triangleq \| \bx^{(L)} - \hat \bx^{(L)} \| \\
& = \left \|
\left (\bx^{(0)} - \sum_{t \in \iset{L}} \eta \grad_\bx E(\bx^{(t-1)})  \right)
-
\left (\hat\bx^{(0)} - \sum_{t \in \iset{L}} \eta \grad_\bx \hat E(\hat \bx^{(t-1)})  \right)
\right\| \\
& = \left \|
\sum_{t \in \iset{L}} -\eta \left ( \grad_\bx E(\bx^{(t-1)}) - \grad_\bx \hat E(\hat \bx^{(t-1)})  \right)
\right\| \\
& = \left \|
\sum_{t \in \iset{L}} \eta \left ( \grad_\bx E(\bx^{(t-1)}) - \grad_\bx \hat E(\hat \bx^{(t-1)})  \right)
\right\| \\
& \leq \sum_{t \in \iset{L}}  \left \|
 \eta \left ( \grad_\bx E(\bx^{(t-1)}) - \grad_\bx \hat E(\hat \bx^{(t-1)})  \right)
\right\|.
\end{align}

Let us denote the individual terms above as $d^{(t)}$ with $D^{(L)} = \sum_{t \in \iset{L}} d^{(t)}$.
Also, let us denote by $A = 2C_1 K e^{\beta E(\bx)} \sqrt{D/Y}$ and by $B = (1+2K\beta e^{\beta/2})$. Then writing out the $t$-th term

\begin{align}
d^{(t)} & = \left \| \eta \left ( \grad_\bx E(\bx^{(t-1)}) - \grad_\bx \hat E(\hat \bx^{(t-1)})  \right) \right\| \\
& \leq \left \| \eta \left ( \grad_\bx E(\bx^{(t-1)}) - \grad_\bx E(\hat \bx^{(t-1)})  \right) \right\| 
+ \left \| \eta \left ( \grad_\bx \hat E(\hat \bx^{(t-1)}) - \grad_\bx \hat E(\hat \bx^{(t-1)})  \right) \right\| \\
& \leq \eta B \| \bx^{(t-1)} - \hat \bx^{(t-1)} \| + \eta A,
\end{align}
where the first term is bounded using \cref{alem:grad-stab} and the definition of $B$, and the second term is bounded using \cref{alem:kdam-grad-approx} and the definition of $A$.

Note that this gives us the recursion $d^{(t)} \leq \eta A + \eta B D^{(t-1)}$, and thus, $D^{(L)} \leq \sum_{t\in \iset{L}} \eta (A + B D^{(t-1)})$.

Writing out the recursion using induction, we can show that
\begin{align}
D^{(L)} \nonumber \\
& = \eta A \left( 
\sum_{t \in \iset{L}} 1
+ \sum_{t \in \iset{L}} t (\eta B) 
+ \sum_{t \in \iset{L}} \sum_{t_1 \in \iset{L}} t_1 (\eta B)^2
+ \sum_{t \in \iset{L}} \sum_{t_1 \in \iset{L}} \sum_{t_2 \in \iset{t_1}} t_2 (\eta B)^3
\right. \nonumber \\
& \qquad\qquad \left.
+ \cdots 
+ \sum_{t \in \iset{L}} \sum_{t_1 \in \iset{L}} \cdots \sum_{t_{L-1} \in \iset{t_{L-2}}} t_{L-1} (\eta B)^{L-1}
\right) 
\\
& \leq \eta A \left(
L + L(\eta B L) + L(\eta B L)^2 + \cdots + L(\eta B L)^{L-1}
\right)\\
& = \eta A L \frac{1 - (\eta B L)^L}{1 - \eta B L}.
\end{align}
Replacing the values of $A$ and $B$ above gives us the statement of the theorem.
\end{proof}

\subsubsection{Dependence on the initial energy $E(\bx)$ of the input.}

The divergence upper bound in \cref{eq:div-ub} (in \cref{thm:div}) depends on the term $\exp(\beta E(\bx))$. However, note that, for the energy function defined in  \cref{eq:div-en}, assuming that all memories and the initial queries are in a ball of diameter $1$ (which is the assumption A1 in \cref{thm:div}), $E(\bx) \leq \frac{1}{2} - \frac{\log K}{\beta}$, implying that $\exp(\beta E(\mathbf{x})) \leq \exp(\beta/2) / K$, and we can replace this in the upper bound and remove the dependence on $E(\bx)$.

However, an important aspect of our analysis is that the bound is input specific, and depends on the initial energy $E(\bx)$. As discussed above, this can be upper bounded uniformly, but our bound is more adaptive to the input $\bx$.

For example, if the input is initialized near one of the memories, while being sufficiently far from the remaining $(K-1)$ memories, then $\exp (\beta E(\mathbf{x}))$ term can be relatively small. 
More precisely, with all memories and queries lying in a ball of diameter $1$, let the query be at a distance $r < 1$ to its closest memories, and as far as possible from the remaining $(K-1)$ memories. In this case, the initial energy $E(\bx) \approx -(1/\beta) \log (\exp(-\beta r / 2) + (K-1) \exp(-\beta/2))$, implying that 
\begin{equation}
\exp(\beta E(\bx)) \approx \frac{\exp(\beta r / 2)}{(1 + \exp(-\beta(1 - r)/2)} \leq \exp(\beta r / 2).
\end{equation} 
For sufficiently small $r < 1$, the above quantity can be relatively small. If, for example, $r \sim O(\log K)$, then $\exp(\beta E(\bx)) \sim O(K^\beta)$, while $r \to 0$ gives us $\exp(\beta E(\bx)) \to O(1)$. This highlights the adaptive input-dependent nature of our analysis.

\clearpage

\end{document}